\documentclass{article}

\usepackage{iclr2027_conference,times}

\usepackage{amsmath,amsfonts,bm}

\def\eqref#1{equation~\ref{#1}}

\def\1{\bm{1}}

\DeclareMathAlphabet{\mathsfit}{\encodingdefault}{\sfdefault}{m}{sl}
\SetMathAlphabet{\mathsfit}{bold}{\encodingdefault}{\sfdefault}{bx}{n}

\DeclareMathOperator*{\argmax}{arg\,max}

\usepackage{hyperref}
\usepackage{url}
\usepackage{amsmath,amssymb,amsthm}
\usepackage{booktabs}
\usepackage{graphicx}
\usepackage{subcaption}
\usepackage{multirow}
\usepackage{makecell}
\usepackage{xcolor}
\usepackage{listings}
\usepackage{tcolorbox}
\usepackage{enumitem}

\definecolor{BrickRed}{rgb}{0.6,0,0}
\definecolor{RoyalBlue}{rgb}{0,0,0.5}
\definecolor{Tdgreen}{rgb}{0,0.4,0.7}

\definecolor{CodeKeyword}{HTML}{008000}
\definecolor{CodeComment}{HTML}{287D80}
\definecolor{CodeFunction}{HTML}{0000CC}
\definecolor{CodeBackground}{HTML}{F2F2F2}
\lstdefinestyle{tisd-pseudocode}{
  language=Python,
  basicstyle=\small\ttfamily,
  keywordstyle=\color{CodeKeyword}\bfseries,
  commentstyle=\color{CodeComment},
  stringstyle=\color{BrickRed},
  emph={tisd_step},
  emphstyle=\color{CodeFunction},
  numbers=left,
  numberstyle=\scriptsize\color{black!60},
  numbersep=8pt,
  stepnumber=1,
  firstnumber=1,
  numberblanklines=true,
  columns=fullflexible,
  keepspaces=true,
  showstringspaces=false,
  tabsize=4,
  breaklines=true,
  aboveskip=0pt,
  belowskip=0pt
}

\hypersetup{
  colorlinks=true,
  allcolors=black,
  linkcolor=BrickRed,
  citecolor=RoyalBlue,
  pdftitle={TISD: On-Policy Self-Distillation with Trajectory Intervention},
  pdfauthor={Taeckyung Lee, Rinat Amankos, Jeonghye Kim, Hyungjun Yoon, Woogyeol Jin, Sung-Ju Lee}
}

\theoremstyle{plain}
\newtheorem{proposition}{Proposition}

\newcommand{\method}{\text{TISD}}
\newcommand{\Lorig}{\mathcal{L}_{\tt{SDPO}}}

\newcommand{\sg}{{\tt sg}}
\newcommand{\tstar}{t^{*}}

\title{\method{}: On-Policy Self-Distillation with\\Trajectory Intervention}
\author{
\begin{minipage}[t]{\textwidth}
\vspace{-0.2in}
\raggedright
\textbf{Taeckyung Lee$^1$}\hspace{0.6cm}\textbf{Rinat Amankos$^1$}\hspace{0.6cm}\textbf{Jeonghye Kim$^1$}\hspace{0.6cm}\textbf{Hyungjun Yoon$^1$}\\
\textbf{Woogyeol Jin$^1$}\hspace{0.6cm}\textbf{Sung-Ju Lee$^1$}\\[3pt]
{\normalfont
\textsuperscript{1}KAIST \\[1pt]
{\small \texttt{\{taeckyung, profsj\}@kaist.ac.kr}} \\[2pt]
\bloglogo ~ {\small \href{https://miil.kaist.ac.kr/projects/tisd/}{\texttt{project page}}}
~~
\githublogo ~ {\small \href{https://github.com/taeckyung/TISD}{\texttt{taeckyung/TISD}}}
}
\end{minipage}
}
\date{}

\def\githublogo{\raisebox{-1.5pt}{\includegraphics[height=1.05em]{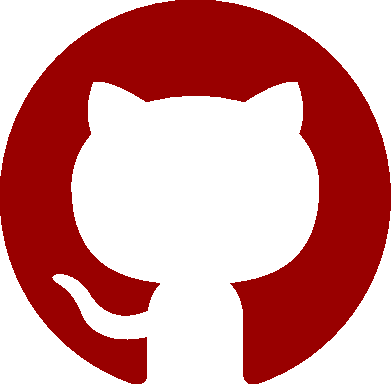}}}
\def\bloglogo{\raisebox{-1.5pt}{\includegraphics[height=1.05em]{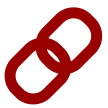}}}

\begin{document}

\iclrfinalcopy
\maketitle
\lhead{Preprint}
\vspace{-0.16in}

\begin{abstract}

On-policy self-distillation (OPSD) provides dense teacher targets, but evaluates them only along student-sampled rollouts. When the privileged teacher favors an alternative action at a visited prefix, OPSD can provide a target for the branch decision but cannot supervise the successor contexts induced by that action unless the student samples it. This creates a training-time data-collection bottleneck and suggests a different role for teacher--student disagreement: proposing a trajectory branch rather than identifying a sufficient local repair. Our diagnostic framework using controlled token interventions reveals that a teacher-preferred token at peak disagreement can improve student continuation success, while its local corrective value is limited. Motivated by this finding, we introduce a simple branch--regenerate--distill algorithm, \textbf{Trajectory-Intervention Self-Distillation (\method{})}. \method{} forces a teacher-selected branch action, returns suffix generation to the student, and distills the full trajectory under the privileged-context-conditioned teacher. Across the coding models, \method{} improves average Avg@4 over SDPO by 1.2 percentage points. Across the science domains, it improves average Avg@128 by 0.8 points under an equal-step budget and by 0.3 points under an equal-time budget. These results support teacher-guided branching as a way to expose useful successor contexts for self-distillation.

\end{abstract}

\section{Introduction}
\label{sec:intro}

Reinforcement learning with verifiable rewards (RLVR) trains large language models on tasks whose outputs can be automatically checked, such as code against test suites or mathematical reasoning against ground-truth answers \citep{le2022coderl,shao2024deepseekmath,guo2025deepseekr1}. The verifier provides a response-level reward to an output that may span hundreds or thousands of tokens.

On-policy distillation (OPD) provides denser supervision by training the student on its own rollouts using token-level targets from a teacher model~\citep{agarwal2024opd}. On-policy self-distillation (OPSD) removes the need for a distinct teacher: the {teacher} is the same model with additional privileged context unavailable to the student, such as a code execution trace, correct rollout, or ground-truth solution~\citep{hubotter2026sdpo,zhao2026sdr,song2026rltf}. The student then minimizes the divergence from the teacher at each token.

In standard OPSD, teacher targets are evaluated at prefixes in student-sampled trajectories. When the teacher favors a different token at a given prefix, its target can teach the student to prefer that token, while the collected suffix follows the original student choice. Supervising the continuation \textit{after} the teacher-preferred alternative requires collecting its successor contexts. This creates a training-time data-collection bottleneck: standard OPSD provides direct teacher supervision at subsequent prefixes on an alternative branch only when those prefixes appear in student-sampled rollouts.

Teacher-generated trajectories offer another way to expose alternative continuations. TRD~\citep{jiang2026trajectory} uses a privileged-context teacher to rewrite student responses. The student is then trained on continuations generated by a teacher with access to information unavailable to the student at deployment. In SciKnowEval, TRD exhibits late-training accuracy collapse as responses reach the generation limit (Section~\ref{sec:method-student-generation}). This failure case motivates using the teacher to select a branch action and the student to generate the continuation.

Our insight is to use teacher--student disagreement to identify where to branch and collect an alternative student-generated continuation for supervision. We first analyze where disagreement concentrates, then test its value for trajectory steering through controlled token interventions (Section~\ref{sec:analysis}). On failed LiveCodeBench rollouts, the peak-disagreement position $\tstar$ accounts for $9$--$20\%$ of total response KL divergence across model scales, identifying a concentrated target for intervention. 
On these failed rollouts, Qwen3-8B achieves $9.9\%$ continuation success when the teacher-preferred token is forced at $\tstar$, compared with $1.7\%$ when the original token is retained, with the student regenerating the suffix in both cases.
This result motivates using a single teacher-selected action to collect alternative student continuations for dense supervision within the same training iteration.

We turn this insight into \textbf{Trajectory-Intervention Self-Distillation~(\method{})}, a branch--regenerate--distill framework (Figure~\ref{fig:overview}). For each eligible failed rollout, \method{} finds the peak-divergence position $\tstar$, forces the teacher's preferred token, and lets the student regenerate the suffix. It then distills teacher predictions along the full trajectory, including the original student prefix, the intervened token, and the regenerated student suffix. This supervises both the branch decision and subsequent predictions at the newly visited prefixes.

{\method{} improves Avg@4 over SDPO by 1.2 percentage points on average across three coding models. Across the four science domains, \method{} improves Avg@128 over SDPO by an average of 0.8 points under equal training steps and 0.3 points under equal training time (Section~\ref{sec:experiments}). These results demonstrate the benefit of teacher-guided branching both with and without rich execution feedback.}

Our contributions are:
\begin{itemize}[itemsep=0pt, topsep=0pt, parsep=0pt, partopsep=0pt, leftmargin=*]
\item A diagnostic framework that distinguishes local correction from trajectory steering through controlled token interventions. We show that direct token substitution rarely repairs failed responses, whereas teacher-guided branching at peak disagreement improves Qwen3-8B continuation success (Section~\ref{sec:analysis}).
\item \textbf{\method{}}, a simple self-distillation method that forces a teacher-preferred token, lets the student regenerate the suffix, and distills teacher predictions along the resulting trajectory. This makes teacher supervision available along alternative continuations without waiting for the student to select the branch on its own (Section~\ref{sec:method}).
\item {An evaluation across coding and science showing higher average performance than SDPO under the reported step- and time-based comparisons (Section~\ref{sec:experiments}).}

\end{itemize}

\begin{figure*}[t]
  \centering
  \begin{minipage}[t]{\textwidth}
    \centering
    \includegraphics[width=\linewidth]{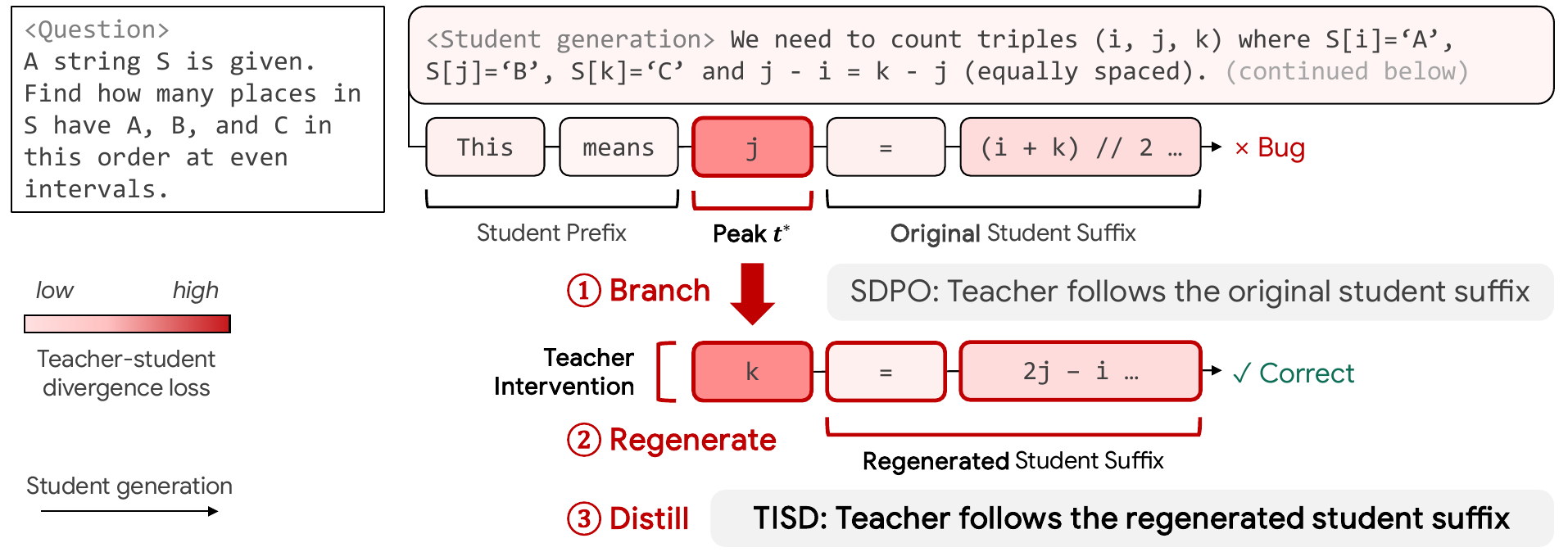}

\captionsetup{skip=3pt}
\captionof{figure}{\method{} changes where self-distillation is evaluated. Standard OPSD (e.g., SDPO) distills teacher targets along the original student trajectory, including the suffix that follows the student's sampled action. \method{} instead forces the teacher-preferred token at the peak-divergence position, lets the student regenerate the suffix, and distills the resulting trajectory. This exposes successor prefixes that are absent from the original rollout. }
    \label{fig:overview}
  \end{minipage}
  \vspace{-0.2cm}
\end{figure*}

\section{Related Work}
\label{sec:related}

\paragraph{On-policy (self-)distillation.}
On-policy self-distillation (OPSD) derives dense targets from a student-derived teacher conditioned on privileged information. SDPO~\citep{hubotter2026sdpo} and the Self-Distilled Reasoner~\citep{zhao2026sdr} use successful rollouts or environment feedback to construct teacher targets, while RLTF~\citep{song2026rltf} formalizes learning from text feedback and analyzes its optimization behavior. 
\citet{kim2026why} further identify suppression of uncertainty expression as a source of degraded out-of-distribution reasoning in self-distillation.
Related OPD methods modify how teacher guidance is provided along student trajectories: TGPO~\citep{liu2026teacherguidedpolicyoptimizationonpolicy} trains the student to predict the teacher's argmax token at each visited prefix, while Trajectory-aware OPD~\citep{jiang2026topd} uses near-future trajectory information to identify divergent states and distribute guidance across subsequent tokens. TRD~\citep{jiang2026trajectory} changes the training trajectory itself by using a privileged-context teacher to rewrite the student response before distillation. \method{} uses teacher guidance to change the trajectory through a single branch action, then returns generation to the student. The student regenerates the suffix without privileged context, and the privileged-context teacher provides dense targets along the resulting trajectory.

\vspace{-0.1cm}\paragraph{Credit assignment and selective token signals.} Prior work improves learning from response-level rewards through process reward models~\citep{lightman2024verify,wang2024mathshepherd}, Monte Carlo value estimation~\citep{kazemnejad2024vineppo}, return decomposition~\citep{parthasarathi2025grpolambda}, implicit process rewards~\citep{cui2025prime}, execution-derived penalties~\citep{liu2023rltf}, and revision from execution feedback~\citep{gehring2025rlef}. Other studies show that token-level signals are highly non-uniform: low-probability tokens can dominate policy-gradient updates~\citep{yang2025lowprob}, and high-entropy minority tokens disproportionately drive RL gains~\citep{wang2025beyond8020}. Selective distillation methods use token-importance signals to select or weight supervision~\citep{xu2026tip,tavor2026rethinking,huang2026selectkd}. 
\method{} instead uses teacher–student divergence to select a branch point and forces the teacher-preferred token. The student then regenerates the suffix, exposing new prefixes for dense teacher supervision.

\section{Teacher-Student Divergence as a Trajectory Steering Signal}
\label{sec:analysis}

Standard OPSD evaluates teacher targets at prefixes in student-sampled rollouts. When the teacher favors an alternative token, the collected rollout still follows the student's original choice and does not contain the continuation induced by that alternative. We investigate three questions:
\begin{enumerate}[leftmargin=2em]
    \item \textbf{Where is teacher--student disagreement concentrated?} (\S\ref{sec:analysis-setup})
    \item \textbf{Does strong teacher--student disagreement identify a local correction sufficient to repair the response?} (\S\ref{sec:analysis-findings})
    \item \textbf{Does taking the teacher-preferred action expose a better student continuation?} (\S\ref{sec:analysis-steering})
\end{enumerate}

We conduct the main diagnostics on the Qwen3 series~\citep{yang2025qwen3} in the SDPO~\citep{hubotter2026sdpo} setting on LiveCodeBench v6~\citep{jain2024livecodebench}. Appendix~\ref{app:analysis} provides the intervention protocol and additional analyses.

\subsection{Where Is Teacher--Student Disagreement Concentrated?}
\label{sec:analysis-setup}

We analyze how teacher--student divergence is distributed across response positions to identify candidate intervention points.

OPSD derives a teacher distribution $\pi_\phi$ from the student $\pi_\theta$ and conditions it on privileged context $c$ unavailable to the student during generation. For a student rollout $y=(y_1,\dots,y_T)$ generated from prompt $x$, the per-rollout SDPO distillation loss is
\begin{equation}
\label{eq:sdpo}
\Lorig
= \frac{1}{T}\sum_{t=1}^{T}
\mathcal D_t \!\left(
\pi_\theta(\cdot \mid x, y_{<t})
\;,\;
\sg[\pi_\phi(\cdot \mid x, y_{<t}, c)]
\right),
\end{equation}
where $\mathcal D_t $ denotes the per-token divergence metric and $\sg[\cdot]$ denotes stop-gradient. The privileged context includes a successful sibling from the same rollout group when available. In the rich-feedback setting, execution feedback, such as error traces, is used when no successful sibling is available~\citep{hubotter2026sdpo}.
For the diagnostic, we measure teacher--student disagreement at each position using reverse KL, $\mathcal D_t =\mathcal D_{\tt KL}\!\left(\pi_\theta(\cdot\mid x,y_{<t})\,\|\,\pi_\phi(\cdot\mid x,y_{<t},c)\right)$, a default configuration of SDPO on LiveCodeBench. 

\vspace{-0.1cm}\paragraph{Why analyze failed rollouts?} Failed rollouts provide useful supervision and allow us to test whether teacher guidance corrects an unsuccessful response. At training step 80, Qwen3-1.7B achieves 48.0\% validation accuracy when SDPO supervises both successful and failed rollouts, compared with 37.2\% when supervision is restricted to successful rollouts (Appendix~\ref{app:failed-supervision}). This result motivates examining the guidance available on failed rollouts and its value for correction and trajectory steering.

\vspace{-0.15cm}\paragraph{Concentrated teacher--student disagreement.}
We observe that teacher--student disagreement is highly uneven across the response positions. The \textbf{peak-divergence position} $\tstar=\argmax_t D_t$ alone carries $9$--$20\%$ of total response KL across model scales (Figure~\ref{fig:kl-distribution}), a $40$--$80\times$ concentration relative to a mean KL-divergence. This concentration motivates examining $\tstar$ as a candidate intervention point. We assess its local corrective value by examining its semantic role and testing whether substituting the teacher-preferred token repairs the original response.

\begin{figure*}[t]
  \centering

  \iffalse
  \begin{minipage}[t]{0.48\textwidth}
    \centering
    \includegraphics[width=\linewidth]{figures/fig_tstar_semantic_role.pdf}
    \captionsetup{skip=3pt}
    \caption{Semantic-role distribution (\%) of the student token at $\tstar$ across Qwen3 series (SDPO step 0, LiveCodeBench v6). Evaluated by LLM judge (GPT-5.4).}
    \label{fig:tstar-semantic-role}
  \end{minipage}
  \fi
  \begin{minipage}[t]{0.66\textwidth}
    \vspace{0pt}
    \centering
    \begin{subfigure}[t]{0.484848\linewidth}
      \centering
      \includegraphics[width=\linewidth,trim=1pt 0 1pt 0,clip]{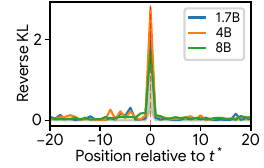}
      \caption{Mean per-token KL profile around $\tstar$.}
      \label{fig:kl-profile}
    \end{subfigure}
    \hfill
    \begin{subfigure}[t]{0.484848\linewidth}
      \centering
      \includegraphics[width=\linewidth,trim=1pt 0 1pt 0,clip]{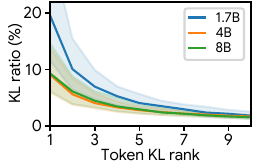}
      \caption{KL concentration ratio by token rank.}
      \label{fig:kl-conc-ratio}
    \end{subfigure}
    \captionsetup{skip=3pt}
    \caption{Concentration of token-level SDPO loss (reverse KL) on failed LiveCodeBench v6 rollouts (Qwen3). \textbf{(a)} Mean per-token KL in a $\pm20$-token window around the peak-divergence position $\tstar$. \textbf{(b)} Fraction of total response KL carried by tokens ranked by KL. The peak token is sharp and consistent across scales, carrying $9$--$20\%$ of total response KL across scales, while the rank-10 contribution is below $1\%$.}
    \label{fig:kl-distribution}
  \end{minipage}
  \hfill
  \begin{minipage}[t]{0.32\textwidth}
    \vspace{0pt}
    \centering
    \includegraphics[width=\linewidth,trim=1pt 0 1pt 0,clip]{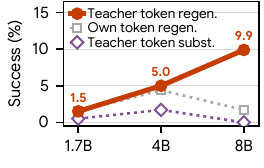}
    \vspace{-4.5pt}
    \captionsetup{skip=0pt}
    \caption{Execution success after intervention at $\tstar$ on failed LiveCodeBench v6 rollouts (Qwen3). \emph{Teacher/own token} force the corresponding action at $\tstar$ and then regenerate the suffix; \emph{substitution} updates only the token at $\tstar$ and preserves the original suffix. }
    \label{fig:tstar-continuation-success}
  \end{minipage}
  \vspace{-0.2cm}
\end{figure*}

\subsection{Peak Divergence Rarely Provides a Local Repair}
\label{sec:analysis-findings}

\paragraph{Peak divergence often occurs outside direct execution decisions.}
On failed rollouts, the semantic role of $t^*$ shifts toward non-code text as model scale increases (Appendix Table~\ref{tab:semantic-role}). Non-code tokens, including comments, docstrings, code fences, and reasoning prose, account for 45.0\% of peak positions at 1.7B and over 70\% at 4B and 8B, while algorithmic tokens decrease from 36.1\% to 11.6\%. Thus, particularly at larger scales, the position of maximal teacher--student disagreement often does not correspond to an explicit algorithmic or control-flow decision. This suggests that peak divergence does not necessarily identify the token directly responsible for execution failure.

\vspace{-0.1cm}\paragraph{Teacher preference alone is rarely a sufficient local repair.}
Directly substituting the teacher-preferred token into the original failed response and rerunning the test suite repairs only $0.0$--$1.7\%$ of failures across model scales (Figure~\ref{fig:tstar-continuation-success}). In contrast, allowing the privileged teacher to generate the suffix from the same intervention point increases success to $7.9\%$ (1.7B), $10.5\%$ (4B), and $13.3\%$ (8B). These results indicate that the teacher-preferred token is rarely sufficient to repair the original trajectory on its own, while changing the continuation after that token can recover a subset of failed responses.

\subsection{A Teacher-Selected Action Can Redirect the Student}
\label{sec:analysis-steering}

\paragraph{A single-token intervention can redirect student generation.}
We force either the teacher's argmax token or the student's original token at $t^*$, then let the student regenerate the suffix in both conditions (Figure~\ref{fig:tstar-continuation-success}). The effect depends strongly on model scale: teacher-token and matched-control success are both $1.5\%$ at 1.7B, differ only slightly at 4B ($5.0\%$ vs.\ $4.4\%$), but separate substantially at 8B ($9.9\%$ vs.\ $1.7\%$). Thus, for Qwen3-8B, the gain cannot be explained by suffix resampling alone: changing only the branch token to the teacher-preferred action leads to substantially more successful student continuations. Of the 24 failed rollouts solved by teacher continuation at 8B, 17 are also solved by the intervened student continuation ($70.8\%$; Appendix Table~\ref{tab:continuation-overlap}).

\subsection{Implications for Trajectory-Level Learning}
\label{sec:analysis-bridge}

The diagnostic motivates disagreement-guided branching as a mechanism for collecting alternative student trajectories. Peak divergence identifies a candidate branch action even when the teacher-preferred token is not sufficient to repair the original response. Forcing this action yields substantially more successful student continuations than the matched resampling control in the larger model (Qwen3-8B). More generally, branching changes the successor prefixes visited by the student, creating contexts that are absent from the original rollout and can therefore receive new privileged-context teacher supervision.

\section{\method{}: Self-Distillation with Trajectory Intervention}
\label{sec:method}

Section~\ref{sec:analysis} shows that a teacher-preferred token can redirect student generation even when it does not repair the sampled response in place. Based on this observation, we propose \method{}, a simple branch--regenerate--distill algorithm that forces a teacher-selected token at a selected branch point, returns suffix generation to the student, and distills the resulting trajectory. We describe the algorithm in \S\ref{sec:algorithm}, motivate student-side regeneration in \S\ref{sec:method-student-generation}, and analyze the additional supervision created by regenerated trajectories in \S\ref{sec:method-collection-bottleneck}.

\begin{figure}[t]
  \centering
  \begin{subfigure}[t]{0.49\linewidth}
    \centering
    \includegraphics[width=\linewidth]{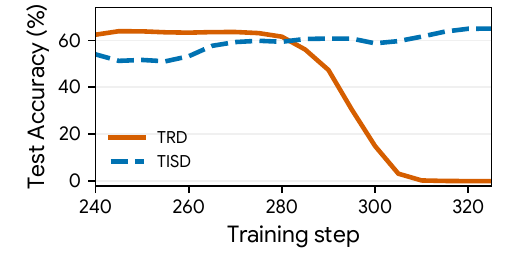}
    \caption{Avg@128 across training steps.}
    \label{fig:trd-biology-validation}
  \end{subfigure}
  \hfill
  \begin{subfigure}[t]{0.49\linewidth}
    \centering
    \includegraphics[width=\linewidth]{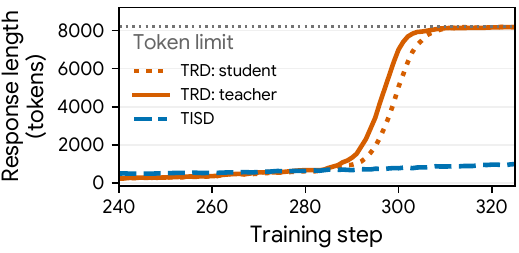}
    \caption{Mean response length across training steps.}
    \label{fig:trd-biology-response-length}
  \end{subfigure}
  \captionsetup{skip=5pt}
  \caption{Training collapse of distillation on fully teacher-regenerated rollouts (TRD) on SciKnowEval L3 biology (Qwen3-8B).}
  \label{fig:trd-biology-collapse}
  \vspace{-0.2cm}

\end{figure}

\subsection{\method{} Algorithm}
\label{sec:algorithm}

\paragraph{Trajectory construction.}
Consider one collection iteration for prompt $x$, with student and self-teacher parameters $\theta$ and $\phi$ fixed during trajectory collection. Let $\pi_{\theta}(\cdot\mid x,y_{<t})$ denote the student distribution over vocabulary $\mathcal{V}$, and let $\pi_{\phi}(\cdot\mid x,y_{<t},c)$ denote the self-teacher distribution conditioned additionally on privileged context $c$. The behavior policy $\beta_{\theta}$ is the sampling policy induced by $\pi_{\theta}$ under the rollout decoding configuration, including temperature and truncation. We distinguish $\beta_{\theta}$ from the underlying model distributions used for branch selection and distillation.

First draw an initial response $y=(y_1,\ldots,y_T)\sim\beta_{\theta}(\cdot\mid x)$ with length $T$, obtain verifier reward $R(y)\in\{0,1\}$, and construct privileged context $c_y$. A successful response is retained unchanged, while a failed response is regenerated when the required privileged context is available. For an eligible failure, \method{} selects the peak-divergence position
\begin{equation}
\tstar
=
\operatorname*{argmax}_{1\leq t<T}
\mathcal D_t\!\left(
\pi_{\theta}(\cdot\mid x,y_{<t}),
\pi_{\phi}(\cdot\mid x,y_{<t},c_y)
\right).
\label{eq:repo-branch-position}
\end{equation}
\method{} then selects the teacher's most likely action at this prefix,
\begin{equation}
\hat y_{\tstar}
=
\operatorname*{argmax}_{a\in\mathcal{V}}
\pi_{\phi}(a\mid x,y_{<\tstar},c_y).
\label{eq:repo-branch-action}
\end{equation}
The teacher-preferred action may coincide with the original student token, $\hat y_{\tstar}=y_{\tstar}$; in this case, resampling the suffix can still expose additional contexts for teacher supervision. \method{} preserves the original prefix, forces $\hat y_{\tstar}$, and returns generation to the student:
\begin{equation}
\tilde y
=
\left(y_{<\tstar}, \; \hat y_{\tstar}, \; \widetilde y_{>\tstar}\right),
\qquad
\tilde y_{>\tstar}
\sim
\beta_{\theta}(\cdot\mid x,y_{<\tstar},\hat y_{\tstar}).
\label{eq:repo-conditional-law}
\end{equation}
The retained trajectory is $z=\tilde y$ for an eligible failed response and $z=y$ otherwise. Thus, the training data include original trajectories as well as regenerated trajectories whose outcomes may remain unsuccessful or become successful.

\vspace{-0.1cm}\paragraph{Distillation on the retained trajectory.}
For both unchanged and regenerated responses, \method{} minimizes teacher--student divergence along the retained trajectory using the privileged context $c_y$ associated with the original rollout and the SDPO objective~\citep{hubotter2026sdpo}:
\begin{equation}
\ell_{\mathrm{dist}}(z,c_y;\theta)
=
\frac{1}{|z|}
\sum_{t=1}^{|z|}
\mathcal D_t\!\left(
\pi_\theta(\cdot\mid x,z_{<t}),
\sg\!\left[
\pi_{\phi}(\cdot\mid x,z_{<t},c_y)
\right]
\right),
\label{eq:repo-per-trace-loss}
\end{equation}
where $\mathcal D_t$ is the divergence metric and $\sg[\cdot]$ denotes stop-gradient.

\subsection{Why Return Generation to the Student?}
\label{sec:method-student-generation}

\method{} returns suffix generation to the student to limit the mismatch between training trajectories and student generation at inference. TRD~\citep{jiang2026trajectory} revises the response using a privileged-context teacher, so the resulting suffix may follow actions and prefixes that the unconditioned student would be unlikely to generate on its own. In \method{}, the teacher determines only the branch action, while all subsequent tokens are sampled from the student's conditional behavior policy (Equation~\ref{eq:repo-conditional-law}). This exposes an alternative branch while keeping the continuation generated by the student itself.

\vspace{-0.1cm}\paragraph{Training instability of teacher regeneration.}
Figure~\ref{fig:trd-biology-collapse} shows a TRD run on SciKnowEval L3 biology with Qwen3-8B. From step 245 to 320, validation Avg@128 falls from 64.8\% to 0.0\%. Over the same interval, the mean student- and teacher-generated response lengths increase from 282 and 270 tokens to 8,192 and 8,167 tokens, respectively, approaching the generation limit. A validation response from this run also contains repetitive claims of alignment with a ``reference solution'' that is unavailable at inference (Appendix~\ref{app:trd-validation-example}). These results demonstrate the training instability of full-teacher regeneration and motivate returning suffix generation to the student in \method{}.

\subsection{\method{} Creates New Opportunities for Teacher Supervision}
\label{sec:method-collection-bottleneck}

\begin{figure}[t]
  \centering
  \begin{subfigure}[t]{0.722\linewidth}
    \centering
    \includegraphics[width=\linewidth]{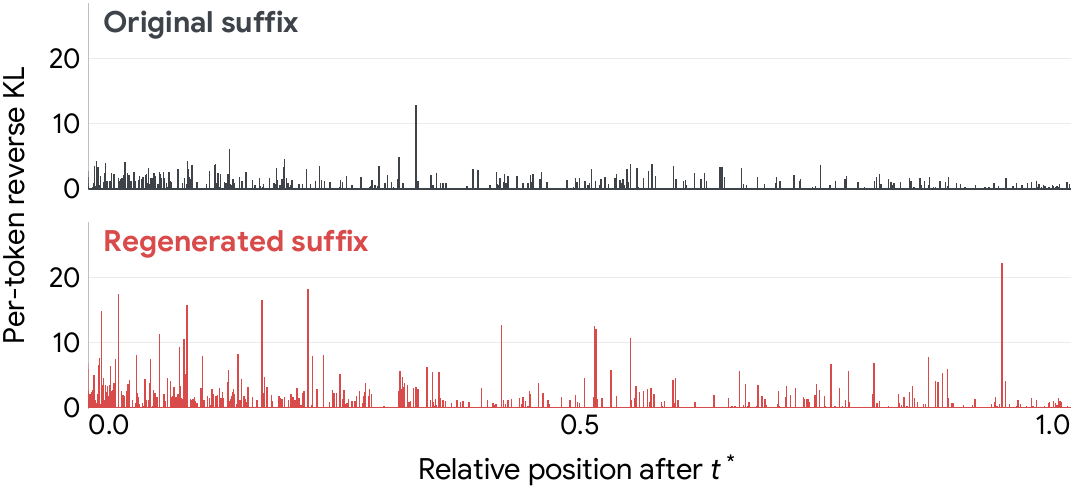}
    \caption{Aggregated token-level reverse KL profiles.}
    \label{fig:suffix-token-kl}
  \end{subfigure}
  \hfill
  \begin{subfigure}[t]{0.258\linewidth}
    \centering
    \includegraphics[width=\linewidth]{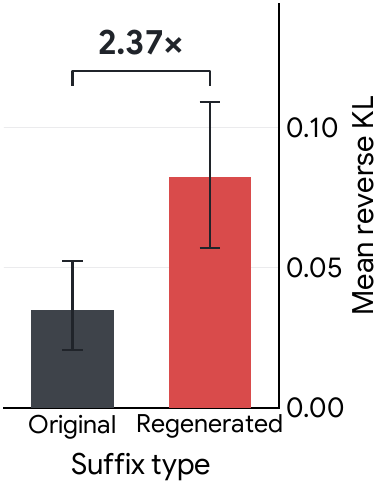}
    \caption{Mean reverse KL.}
    \label{fig:suffix-mean-kl}
  \end{subfigure}
  \caption{Token-level KL-divergence from original and token-changed regenerated suffixes. Results aggregate failed rollouts from LiveCodeBench v6 of the base model (Qwen3-8B).}
  \label{fig:suffix-kl-comparison}
  \vspace{-8pt}
\end{figure}

In standard OPSD, supervising a teacher-preferred action at an observed prefix does not provide supervision at the successor prefixes induced by that action unless the student enters the corresponding branch during collection. This limitation is especially relevant when the student assigns low probability to the teacher-preferred action and therefore rarely visits the branch under on-policy sampling. \method{} instead forces the teacher-selected action, lets the student regenerate the suffix from the intervened prefix, and distills the resulting trajectory using Equation~\ref{eq:repo-per-trace-loss}. This makes teacher supervision available at the branch-induced successor prefixes within the same collection iteration.

Importantly, this benefit is not limited to early training. Even when the teacher targets on the original trajectories are well fitted, the student's predictions at intervention-induced prefixes need not be similarly aligned with the teacher (Appendix~\ref{app:conditional-advantage}, Proposition~\ref{prop:ident-ambiguity}). Thus, regenerated trajectories can introduce teacher supervision constraints that are absent from the original trajectories.

\vspace{-0.1cm}\paragraph{Regenerated suffixes expose additional teacher--student discrepancies.}
In the Qwen3-8B LiveCodeBench v6 diagnostic, regenerated suffixes after intervention have a mean token reverse KL that is $2.37\times$ that of their paired original suffixes (Figure~\ref{fig:suffix-kl-comparison}). This indicates that a single teacher-selected action can move the student into successor contexts with substantially larger teacher--student distribution disagreement, even though the continuation itself is generated by the student. These newly visited contexts provide additional targets for teacher supervision, whose downstream training effect is evaluated in Section~\ref{sec:experiments}.

\section{Experiments}
\label{sec:experiments}

We evaluate \method{} in two OPSD settings: (1) coding with rich environment feedback and (2) science reasoning without rich environment feedback. We compare against GRPO~\citep{shao2024deepseekmath} and SDPO~\citep{hubotter2026sdpo} in both settings, and additionally against TRD~\citep{jiang2026trajectory} on LiveCodeBench. These methods differ in how training trajectories are constructed: SDPO distills along the original student-generated trajectory, TRD distills along a teacher-regenerated trajectory, and \method{} distills along a student-regenerated trajectory initiated by a teacher-selected branch.

\subsection{Learning with Rich Environment Feedback}

\paragraph{Setup.}
We train Qwen3-8B, Qwen3-14B, and Olmo3-7B-Instruct on LiveCodeBench v6 with rich environment feedback. For each failed rollout, the privileged context is a successful sibling when available, and otherwise execution feedback from the verifier. We compare \method{} with GRPO, SDPO, and TRD over 80 training updates. We select the checkpoint with the highest Avg@4 and report Pass@4 from the same checkpoint. Appendix~\ref{app:lcb-hparams} provides the full experimental details.

\begin{table}[t]
  \caption{Best Avg@4 and corresponding Pass@4 (\%) within 80-step training on LiveCodeBench v6 with rich environment feedback. Bold and underlining mark the highest and second-highest trained-method values in each column.}
  \label{tab:lcb-results}
  \centering
  \small
  \setlength{\tabcolsep}{0pt}
  
  \begin{tabular*}{\linewidth}{@{\hspace{3pt}}p{\dimexpr0.20\linewidth-6pt\relax}*{8}{>{\centering\arraybackslash}p{0.101\linewidth}}@{}}
    \toprule
    Method & \multicolumn{2}{c}{Qwen3-8B} & \multicolumn{2}{c}{Qwen3-14B} & \multicolumn{2}{c}{Olmo3-7B-Instruct} & \multicolumn{2}{c}{Average} \\
    \cmidrule(lr){2-3} \cmidrule(lr){4-5} \cmidrule(lr){6-7} \cmidrule(lr){8-9}
    & Avg & Pass & Avg & Pass & Avg & Pass & Avg & Pass \\
    \midrule

    Base & 27.5 & 36.1 & 28.6 & 35.1 & 32.1 & 42.7 & 29.4 & 38.0 \\
    + GRPO & 40.7 & 44.0 & 44.1 & 48.1 & 39.3 & \underline{48.1} & 41.4 & 46.7 \\
    + SDPO & \underline{48.1} & \underline{48.6} & \underline{48.9} & \underline{48.9} & \underline{50.8} & \textbf{53.4} & \underline{49.2} & \underline{50.3} \\
    + TRD & 39.3 & 41.2 & 48.1 & 48.1 & 38.0 & 40.5 & 41.8 & 43.3 \\
    + \textbf{\method{}} (ours) & \textbf{49.2} & \textbf{49.6} & \textbf{51.1} & \textbf{51.1} & \textbf{51.0} & \textbf{53.4} & \textbf{50.4} & \textbf{51.4} \\
    \bottomrule
  \end{tabular*}
  \vspace{-0.1cm}

\end{table}

\vspace{-0.1cm}\paragraph{Results.}
Table~\ref{tab:lcb-results} shows that \method{} achieves the highest Avg@4 across all three models and the highest Pass@4, tying SDPO on Olmo3-7B-Instruct. Averaged across models, \method{} reaches $50.4\%$ Avg@4 and $51.4\%$ Pass@4, compared with $49.2\%$ and $50.3\%$ for SDPO. These gains show that, even with rich execution feedback and dense token-level supervision, performance improves when the teacher-preferred action is used to collect an alternative student continuation. Unlike SDPO, which distills along the original student suffix, \method{} supervises the student-generated continuation that follows the teacher-selected branch. This result supports trajectory collection as an additional design dimension for using privileged feedback effectively.

\subsection{Learning without Rich Environment Feedback}

\paragraph{Setup.}
We study SciKnowEval L3, which contains undergraduate-level biology, chemistry, materials science, and physics problems. Following SDPO~\citep{hubotter2026sdpo}, \method{} and SDPO use a successful sibling as privileged context for a failed rollout when one is available. We additionally compare with off-policy and on-policy GRPO. We exclude TRD from this setting because full-teacher regeneration exhibits severe training instability on SciKnowEval (Section~\ref{sec:method-student-generation}). Training is limited to 10 hours. We report both step-matched and time-matched comparisons, using Avg@128 for checkpoint selection and reporting Pass@128 from the same checkpoint. Table~\ref{tab:science-reasoning} reports performance across the four domains, and Appendix~\ref{app:science-hparams} provides the full experimental details.

\vspace{-0.1cm}\paragraph{Results.}
Table~\ref{tab:science-reasoning} shows that \method{} achieves the highest average Avg@128 under both the equal-step and equal-time comparisons. Averaged across the four domains, \method{} reaches 73.5\% under the 200-step budget and 74.5\% under the 10-hour budget, improving over SDPO by 0.8 and 0.3 percentage points, respectively. Together with the LiveCodeBench results, these gains show that teacher-guided branching can improve OPSD both with rich execution feedback and when privileged supervision is limited to successful sibling trajectories.

\begin{table}[t]
  \caption{Best Avg@128 (\%) on SciKnowEval L3 without rich environment feedback (Qwen3-8B). Steps and Hours denote training budgets of 200 steps and 10 hours, respectively. Average is the unweighted mean across the four disciplines. Bold and underlining mark the highest and second-highest means in each column among the reported methods.}
  \label{tab:science-reasoning}
  \centering
  \small
  \setlength{\tabcolsep}{3pt}

  \begin{tabular*}{\linewidth}{@{\hspace{6pt}\extracolsep{\fill}}l*{10}{c}@{\hspace{6pt}}}
    \toprule
    Method & \multicolumn{2}{c}{Biology} & \multicolumn{2}{c}{Chemistry} & \multicolumn{2}{c}{Materials} & \multicolumn{2}{c}{Physics} & \multicolumn{2}{c}{Average} \\
    \cmidrule(lr){2-3} \cmidrule(lr){4-5} \cmidrule(lr){6-7} \cmidrule(lr){8-9} \cmidrule(lr){10-11}
    & Steps & Hours & Steps & Hours & Steps & Hours & Steps & Hours & Steps & Hours \\
    \midrule
    Base & 30.6 & 30.6 & 41.6 & 41.6 & 59.0 & 59.0 & 59.0 & 59.0 & 47.6 & 47.6 \\
    + GRPO & \underline{59.6} & \underline{63.8} & 73.9 & 75.2 & \textbf{78.5} & \textbf{80.1} & 73.6 & 73.6 & 71.4 & 73.2 \\
    + GRPO (on-policy) & 51.5 & 51.5 & 66.8 & 66.8 & 73.4 & 73.4 & 69.4 & 69.4 & 65.2 & 65.2 \\
    + SDPO & 57.4 & 61.1 & \underline{79.5} & \textbf{81.1} & \underline{77.5} & \underline{78.1} & \underline{76.3} & \underline{76.3} & \underline{72.7} & \underline{74.2} \\
    
    + \textbf{\method{}} (ours) & \textbf{60.7} & \textbf{63.9} & \textbf{80.4} & \underline{80.8} & 74.7 & 75.1 & \textbf{78.2} & \textbf{78.3} & \textbf{73.5} & \textbf{74.5} \\

    \bottomrule
  \end{tabular*}
\end{table}

\subsection{Additional Results}
\label{sec:additional-results}

\paragraph{Performance across training data scale.}
We additionally compare Qwen3-8B training on LCB-Large, which combines v1--v6, and LCB-Small, which contains v6, to examine performance and learning efficiency across training data scales. Figure~\ref{fig:lcb-learning-curve} reports Avg@4 through 80 updates. On LCB-Large, \method{} reaches $57.3$ Avg@4 and $87.8$ Pass@4, compared with the best baseline values of $54.3$ Avg@4 from GRPO and $86.3$ Pass@4 from SDPO. On LCB-Small, \method{} reaches $66.5$ Avg@4 and $92.4$ Pass@4, compared with $65.5$ and $92.1$ for SDPO. TRD reaches $59.9$ Avg@4 on LCB-Small, while its best Avg@4 on LCB-Large ($45.4$) remains below the base model ($49.6$). \method{} reaches the best baseline performance after $33\%$ fewer updates on LCB-Large and $56\%$ fewer updates on LCB-Small.

\begin{figure}[t]
  \centering
  \includegraphics[width=\linewidth]{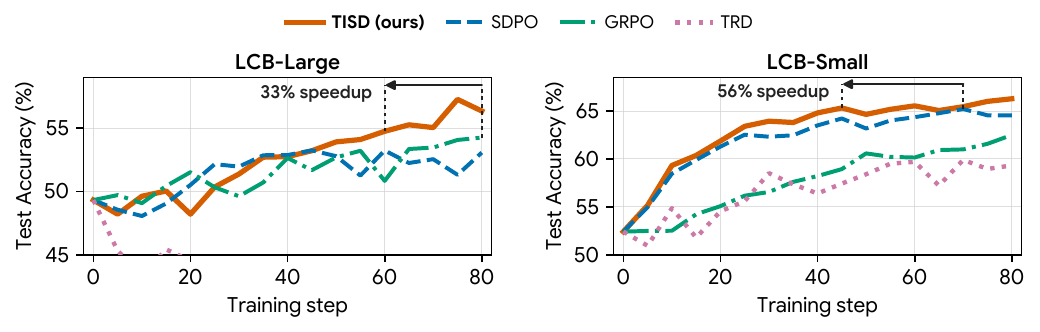}
  \vspace{-0.65cm}
  \caption{Mean test-case pass rate (Avg@4, \%) across training data scales, through 80 training steps in the rich-feedback setting (Qwen3-8B). LCB-Large combines v1--v6, and LCB-Small contains v6. \method{} first exceeds GRPO's maximum on LCB-Large with a $33\%$ speedup and reaches SDPO's maximum on LCB-Small with a $56\%$ speedup.}
  \label{fig:lcb-learning-curve}
  \vspace{-0.2cm}
\end{figure}

\vspace{-0.1cm}\paragraph{Does peak divergence identify a useful branch?}
The gains from regeneration depend on where the branch begins. We replace \method{}'s peak-divergence selector with either a uniformly random response position or the first student--teacher argmax disagreement, while keeping the remaining training and evaluation configuration fixed. On LCB-Small, peak-divergence branching reaches $66.5$ Avg@4, compared with $65.2$ for random-position branching and $64.6$ for first-disagreement branching (Figure~\ref{fig:lcb-branch-position-ablation}). Relative to SDPO at $65.5$ Avg@4, only peak-divergence branching improves performance among the evaluated selectors, supporting divergence magnitude as a useful criterion for selecting the branch position. Retaining the student's original token at the peak-divergence position and resampling the suffix (\emph{Student own token}) reaches only $64.8$ Avg@4, indicating that both the branch position and the teacher-selected action contribute to the gain.

A complementary diagnostic shows that the preferred intervention position depends on who generates the suffix. On failed Qwen3-8B LiveCodeBench v6 rollouts at initialization, forcing the teacher token and returning generation to the student succeeds more often at peak divergence than at the first argmax disagreement ($9.9\%$ vs.\ $6.1\%$). When the teacher instead generates the suffix, the first argmax disagreement yields higher success ($17.1\%$ vs.\ $13.3\%$; Appendix Table~\ref{tab:steer_qwen8b_positions}). Since \method{} returns suffix generation to the student, these results further support peak divergence as the branch selector for \method{}.

\vspace{-0.1cm}\paragraph{Is one regeneration sufficient?}
The standard procedure performs a single branch--regenerate operation. We additionally allow the regenerated trajectory to branch again at a strictly later position, while keeping the original privileged context fixed and making no additional verifier calls between rounds. Under the same 80-update schedule, two total rounds reach $65.2$ Avg@4 and $90.8$ Pass@4, while three rounds reach $65.5$ Avg@4 and $90.1$ Pass@4. In comparison, single-round regeneration achieves $66.5$ Avg@4 and $92.4$ Pass@4, the highest performance among the evaluated depths (Figure~\ref{fig:lcb-recursive-regeneration}). Thus, additional regeneration rounds do not improve performance under this fixed-feedback training setup, supporting the single-round design of \method{}.

\begin{figure}[t]
  \centering
  \begin{minipage}[t]{0.49\linewidth}
    \centering
    \includegraphics[width=\linewidth]{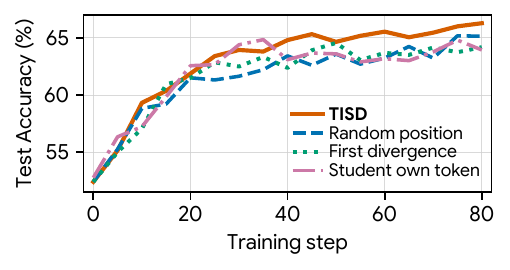}
  \vspace{-0.65cm}
\captionof{figure}{Branch-position and token ablations on Qwen3-8B LiveCodeBench v6 Avg@4 through update 80. Position controls use random positions or first argmax disagreement; Student own token retains the  token at peak reverse KL and resamples the suffix. }
    \label{fig:lcb-branch-position-ablation}
    
  \end{minipage}\hfill
  \begin{minipage}[t]{0.49\linewidth}
    \centering
    \includegraphics[width=\linewidth]{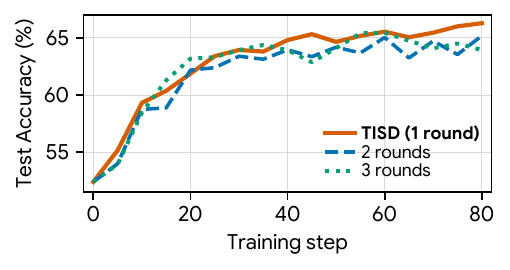}
  \vspace{-0.65cm}
\captionof{figure}{Effect of regeneration depth on Qwen3-8B LiveCodeBench v6 Avg@4 through update 80. Standard \method{} uses one branch--regenerate round; recursive variants apply two or three total rounds at strictly increasing response positions. }
    \label{fig:lcb-recursive-regeneration}
  \end{minipage}
  \vspace{-0.2cm}
\end{figure}

\section{Discussion and Conclusion}
\label{sec:discussion}

On-policy self-distillation depends not only on the teacher targets but also on the trajectories on which they are evaluated. Our results identify trajectory collection as an important design dimension in privileged-context self-distillation: teacher guidance can determine which student contexts receive supervision, in addition to what the student learns at an observed prefix. Based on this insight, we introduce \method{}, which uses a teacher-selected action to initiate an alternative student continuation and distills the resulting trajectory. Improvements across coding and science support teacher-guided branching as a way to expose additional contexts for privileged supervision.

Our analysis further shows that teacher--student disagreement can serve as a trajectory-steering signal rather than only a local correction signal. A teacher-preferred token can redirect subsequent student generation even when substituting that token alone does not repair the response. This distinction suggests that trajectory interventions should be evaluated not only by immediate success, but also by the new contexts they expose for teacher supervision.

\vspace{-0.1cm}\paragraph{Limitations.}
The effectiveness of \method{} depends on the student's ability to continue from a teacher-selected action and on the quality of the privileged teacher targets. Also, branch selection and suffix regeneration introduce additional training-time computation. Reducing this overhead through more efficient branch selection or generation is an important direction for future work.

\section*{AI Use Statement}
We used generative AI tools to (1) polish writing, (2) LLM-judge analysis in Section~\ref{sec:analysis}, and (3) implement methods. We have not used generative AI tools otherwise. We have reviewed all AI-assisted work. We take responsibility for the final content of this work, including text, claims or artifacts produced with the aid of generative AI.

\section*{Reproducibility Statement}
To support reproducibility, Section~\ref{sec:algorithm} specifies the trajectory construction and distillation objective of \method{}, and Appendix~\ref{app:implementation} provides pseudocode. Section~\ref{sec:experiments} describes the datasets, baselines, evaluation metrics, and checkpoint-selection criteria, with training hyperparameters detailed in Appendix~\ref{app:hyperparameters}. Appendix~\ref{app:eval} documents the semantic annotation and continuation-evaluation protocols used in the diagnostic analyses. We will publicly release the source code upon acceptance.

\newpage
\appendix
\section{Implementation of \method{}}
\label{app:implementation}

Figure~\ref{fig:tisd-pseudocode} summarizes the branch--regenerate--distill procedure of \method{}, using the self-distillation objective of SDPO~\citep{hubotter2026sdpo}.

\begin{figure}[ht]
  \centering
  \begin{tcolorbox}[colback=CodeBackground,boxrule=0pt,arc=2mm,left=22pt,right=8pt,top=7pt,bottom=7pt]
\begin{lstlisting}[style=tisd-pseudocode]
def tisd_step(prompts, student, teacher):
    # 1. Sample responses, verify, and build privileged contexts.
    batch = collect_rollouts(student, prompts)
    retained = []

    # 2. Retain original trajectories; regenerate eligible failures.
    for x, y, reward, c in batch:
        z = y
        if reward == 0 and eligible(y, c):
            t = peak_divergence_pos(student, teacher, x, y, c)
            token = teacher.argmax(x, y[:t], c)
            prefix = y[:t] + [token]
            z = prefix + student.generate(x, prefix)
        retained.append((x, z, c))

    # 3. Distill the retained trajectories.
    loss = distillation_loss(student, teacher, retained)
    student.update(loss)
    teacher.update_ema(student)
\end{lstlisting}
  \end{tcolorbox}
  \caption{\method{} pseudocode. Each record contains a prompt \texttt{x}, response \texttt{y}, binary reward, and privileged context \texttt{c}. \texttt{peak\_divergence\_pos} selects the eligible position with the largest teacher--student divergence. Loss and optimization settings appear in Appendix~\ref{app:hyperparameters}.}
  \label{fig:tisd-pseudocode}
\end{figure}

\section{Preliminary Analysis: Full Details}
\label{app:analysis}

This appendix provides the full diagnostic analysis supporting \method{}'s branch selection and regeneration design (Section~\ref{sec:analysis}). The source trajectories come from SDPO training of Qwen3-$\{$1.7B, 4B, 8B$\}$~\citep{yang2025qwen3} on LiveCodeBench v6~\citep{jain2024livecodebench}; a typical training step collects $256$ rollouts ($32$ problems $\times$ $8$ rollouts). We focus on failed rollouts because \method{} branches from them and because they allow us to examine the teacher guidance available beyond the original student trajectory. We first describe the analysis protocol, then provide additional breakdowns of the diagnostics in Section~\ref{sec:analysis} and extend the intervention analysis to other architectures and science domains.

\subsection{Analysis Protocol}
\label{app:eval}

The coding diagnostics combine semantic annotation with execution-based evaluation. For semantic annotation, GPT-5.4 (temperature $0$, $1024$ completion tokens) receives the problem statement, the student's full solution, and a $15$-token context window centered on the selected position, and classifies both the student and teacher tokens by \emph{semantic role} (algorithmic, naming, formatting, non-code text).

For continuation evaluation, we preserve the student prefix before the selected position, force the teacher's argmax token, and greedily generate the suffix using either the teacher (\emph{teacher continuation}) or the student (\emph{student continuation}). The matched student control instead forces the student's original token before regenerating the suffix. On LiveCodeBench, the resulting Python block is extracted and executed against the full private test suite; a response is correct if and only if it passes all test cases.

On SciKnowEval L3, correctness is binary: the extracted answer option must exactly match the reference answer. We apply the same criterion to the original rollout and each regenerated continuation. The cross-domain analysis uses Qwen3-8B rollouts with $32$ problems and eight rollouts per problem in each science domain. Success rates are conditioned on original failures, and pooled science rates aggregate outcomes over all $553$ failed rollouts.

\subsection{KL Concentration CDF}
\label{app:kl-conc}

The KL concentration ratio $C = \mathrm{KL}(\tstar)/\sum_t \mathrm{KL}(t)$ measures the fraction of total response KL carried by the peak-divergence position. Figure~\ref{fig:concentration-cdf} shows its cumulative distribution across $202$ failed Qwen3-1.7B rollouts. The median concentration is ${\sim}19.5\%$, with $\tstar$ accounting for more than $50\%$ of total response KL in some cases. Across model scales, the concentration decreases from $19.5\%$ at 1.7B to ${\approx}9\%$ at 4B and 8B. Even at ${\approx}9\%$, the peak position carries roughly $40\times$ the per-position average KL.

\begin{figure}[ht]
  \centering
  \includegraphics[width=0.6\linewidth]{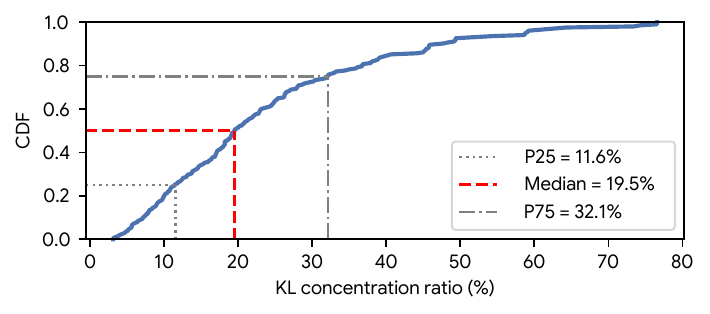}
  \caption{CDF of the reverse-KL concentration ratio $\mathrm{KL}(\tstar)/\sum_t \mathrm{KL}(t)$ on $202$ failed Qwen3-1.7B LiveCodeBench v6 rollouts.}
  \label{fig:concentration-cdf}
\end{figure}

\subsection{Semantic Role Breakdown}
\label{app:semantic}

Table~\ref{tab:semantic-role} reports GPT-5.4 semantic annotations of the student and teacher tokens at $\tstar$ on failed rollouts. At larger model scales, both student and teacher tokens at peak divergence are predominantly classified as non-code text, while algorithmic roles become less frequent.

\begin{table}[ht]
\caption{GPT-5.4 semantic annotations of student and teacher tokens at $\tstar$ on failed Qwen3-\{1.7B, 4B, 8B\} LiveCodeBench v6 rollouts. Entries are percentages within each model scale.}
  \label{tab:semantic-role}
  \centering
  \small
  \begin{tabular*}{0.7\linewidth}{@{\hspace{6pt}\extracolsep{\fill}}llccc@{\hspace{6pt}}}
    \toprule
    & Role & 1.7B & 4B & 8B \\
    \midrule
    \multirow{4}{*}{Student}
    & Algorithmic   & 36.1\% & 11.6\% & 11.6\% \\
    & Naming        & 11.4\% &  3.9\% &  5.5\% \\
    & Formatting    &  7.4\% & 13.3\% & 12.7\% \\
    & Non-code text & 45.0\% & 71.3\% & 70.2\% \\
    \midrule
    \multirow{4}{*}{Teacher}
    & Algorithmic   & 38.6\% & 10.5\% & 14.4\% \\
    & Naming        &  9.9\% &  4.4\% &  3.9\% \\
    & Formatting    &  8.4\% & 11.6\% & 13.8\% \\
    & Non-code text & 43.1\% & 73.5\% & 68.0\% \\
    \bottomrule
  \end{tabular*}
\end{table}

Figure~\ref{fig:transition-matrix} shows the student-to-teacher semantic-role transition matrix. The diagonal dominates across model scales, indicating that the student and teacher tokens at $\tstar$ often belong to the same broad semantic category. Thus, large teacher--student divergence frequently reflects differences within a semantic role, such as alternative variable names or different comments, rather than a change in the type of token being produced.

\begin{figure}[ht]
  \centering
  \includegraphics[width=\linewidth]{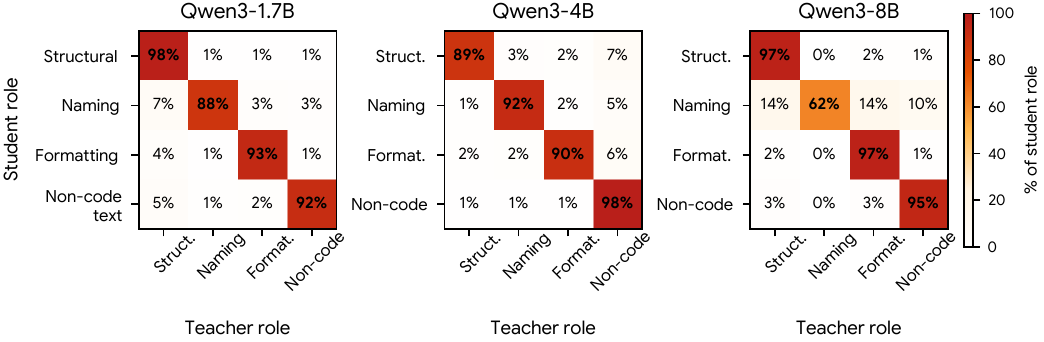}
  \caption{Student-to-teacher semantic-role transitions at $\tstar$ on failed Qwen3-\{1.7B, 4B, 8B\} LiveCodeBench v6 rollouts. Each row shows the distribution of teacher-token roles conditioned on the corresponding student-token role. The dominant diagonal shows that disagreement usually remains within a semantic category.}
  \label{fig:transition-matrix}
\end{figure}

\subsection{Effect of Branch-Selection Position}
\label{app:branch-position}

Peak-divergence selection yields the highest student continuation success among the evaluated branch selectors on failed Qwen3-8B LiveCodeBench rollouts (Table~\ref{tab:steer_qwen8b_positions}). Forcing the teacher token at peak divergence gives $9.9\%$ success, compared with $6.1\%$ at first divergence and $1.7\%$ at random positions. The corresponding original-token controls yield $1.7\%$, $2.2\%$, and $3.3\%$, respectively. Teacher continuation shows a different ordering: first divergence reaches $17.1\%$, exceeding $13.3\%$ at peak divergence. Thus, the branch position that best supports student regeneration differs from that which best supports teacher continuation. Since \method{} returns suffix generation to the student, these results support peak divergence as its branch-selection criterion.

\begin{table}[ht]
  \caption{Effect of branch-position selection on $181$ failed Qwen3-8B LiveCodeBench v6 rollouts. Semantic roles refer to the student token at the selected position and are assigned by GPT-5.4. Execution success is measured after either teacher continuation or student regeneration from the forced teacher or original token. \emph{First divergence} denotes the first position where the student and teacher argmax tokens differ.}
  \label{tab:steer_qwen8b_positions}
  \centering
  \small
  \begin{tabular*}{0.82\linewidth}{@{\extracolsep{\fill}}llccc@{\hspace{6pt}}}
    \toprule
    & & Peak divergence & First divergence & Random \\
    \midrule
    \multicolumn{2}{l}{\textit{Semantic role ({student token at $\tstar$})}} \\
    & Algorithmic   & 11.6\% &  0.6\% & 20.4\% \\
    & Naming        &  5.5\% &  0.0\% &  3.9\% \\
    & Formatting    & 12.7\% & 15.5\% & 11.0\% \\
    & Non-code text & 70.2\% & 84.0\% & 64.6\% \\
    \midrule
    \multicolumn{2}{l}{\textit{Execution-based success rate at $\tstar$}} \\
    & Teacher continuation
      & 13.3\% & 17.1\% & 8.3\% \\
    & Student cont.\ (teacher token)
      & 9.9\% & 6.1\% & 1.7\% \\
    & Student cont.\ (original token)
      & 1.7\% & 2.2\% & 3.3\% \\
    \bottomrule
  \end{tabular*}
\end{table}

\subsection{Persistence During Training}
\label{app:stepwise}

Non-code text remains the largest semantic category at peak divergence throughout Qwen3-1.7B SDPO training. Table~\ref{tab:stepwise} reports checkpoints from steps $0$ to $80$. Over this interval, the overall rollout pass rate increases from $0.211$ to $0.434$, while the algorithmic share of peak-divergence positions among failed rollouts decreases from $36.1\%$ to $26.2\%$. Semantic-role percentages are computed separately over the failed rollouts at each checkpoint.

\begin{table}[ht]
\caption{Evolution of rollout pass rate and semantic roles at the peak-divergence position during Qwen3-1.7B SDPO training on LiveCodeBench v6, using $256$ rollouts per update. Pass rate is computed over all rollouts, while GPT-5.4 semantic-role percentages are computed over failed rollouts at each update; the corresponding failure counts are shown in parentheses.}  \label{tab:stepwise}
  \centering
  \small
  \begin{tabular*}{\linewidth}{@{\extracolsep{\fill}}llccccc@{\hspace{6pt}}}
    \toprule
    & & \makecell{Step 0\\(202)} & \makecell{Step 20\\(183)} & \makecell{Step 40\\(192)} & \makecell{Step 60\\(176)} & \makecell{Step 80\\(145)} \\
    \midrule
    \multicolumn{2}{l}{Pass rate} & 0.211 & 0.285 & 0.250 & 0.312 & 0.434 \\
    \midrule
    \multicolumn{2}{l}{\textit{Semantic role ({student token at $\tstar$})}} \\
    & Algorithmic   & 36.1\% & 35.5\% & 36.5\% & 35.8\% & 26.2\% \\
    & Naming        & 11.4\% & 15.3\% & 12.0\% & 14.8\% &  9.7\% \\
    & Formatting    &  7.4\% &  8.2\% & 13.5\% & 11.4\% & 15.9\% \\
    & Non-code text & 45.0\% & 41.0\% & 38.0\% & 38.1\% & 48.3\% \\
    \bottomrule
  \end{tabular*}
\end{table}

\subsection{Contribution of Failed-Rollout Supervision}
\label{app:failed-supervision}

We ablate supervision on failed rollouts during Qwen3-1.7B SDPO training on LiveCodeBench. At step $80$, applying the KL loss only to successful rollouts yields $0.372$ validation accuracy, compared with $0.480$ when supervising both successful and failed rollouts, a difference of $10.8$ percentage points. This fixed-step comparison motivates analyzing the teacher guidance available on failed rollouts, which remain useful training examples under the standard SDPO objective.

\subsection{Cross-Architecture Validation}
\label{app:crossarch}

We replicate the Section~\ref{sec:analysis} diagnostic on OLMo3-7B-Instruct~\citep{olmo2025olmo3} at step $0$, using $221$ non-truncated LiveCodeBench v6 rollouts after excluding $35$ responses that reached the $16{,}384$-token limit. The main patterns observed with Qwen3 persist (Table~\ref{tab:olmo3}). On failed rollouts, non-code text accounts for $81.4\%$ of peak-divergence positions, and teacher continuation succeeds on $24.8\%$ of failures. The trajectory-steering effect also reproduces: forcing the teacher token and returning generation to the student yields $10.1\%$ success, compared with $6.2\%$ when retaining the original student token. Thus, the qualitative findings on peak-divergence semantics and teacher-guided student regeneration extend beyond the Qwen3 architecture.

\begin{table}[ht]
\caption{Reverse-KL diagnostic on $221$ non-truncated OLMo3-7B-Instruct LiveCodeBench v6 rollouts. Entries report semantic-role distributions at $\tstar$ and execution success after continuation from $\tstar$, separately for failed and successful original rollouts.}  \label{tab:olmo3}
  \centering
  \small
  \begin{tabular*}{0.6\linewidth}{@{\extracolsep{\fill}}llcc@{\hspace{3pt}}}
    \toprule
    & & \makecell{Fail\\(129)} & \makecell{Succ\\(92)} \\
    \midrule
    \multicolumn{2}{l}{\textit{Semantic role (student token at $\tstar$)}} \\
    & Algorithmic   &  7.8\% & 15.2\% \\
    & Naming        &  3.1\% & 10.9\% \\
    & Formatting    &  7.8\% & 10.9\% \\
    & Non-code text & 81.4\% & 63.0\% \\
    \midrule
    \multicolumn{2}{l}{\textit{Execution-based success rate at $\tstar$}} \\
    & Teacher continuation & 24.8\% & 69.6\% \\
    & Student cont. (teacher token) & 10.1\% & -- \\
    & Student cont. (original token) & 6.2\% & -- \\
    \bottomrule
  \end{tabular*}
\end{table}

\subsection{Cross-Domain Validation}
\label{app:crossdomain}

Teacher-token intervention also improves student continuation success on SciKnowEval L3 (Table~\ref{tab:tstar_fail_lcb_science_qwen8b}). Across $553$ failed Qwen3-8B science rollouts, student continuation from the teacher token succeeds on $39.4\%$, compared with $24.4\%$ for the original-token control, with the improvement appearing in all four domains. The corresponding LiveCodeBench rates are $9.9\%$ and $1.7\%$. Teacher continuation itself is substantially more successful on science, reaching $93.5\%$ compared with $13.3\%$ on coding failures. These results show that teacher-selected branch actions improve student continuation across both task settings, while the teacher's ability to complete the trajectory differs substantially between science and coding.

\begin{table*}[ht]
\caption{Verifier-based intervention on failed Qwen3-8B SciKnowEval L3 rollouts at SDPO update 0. The teacher is conditioned on the first successful same-problem rollout in source order, excluding the target rollout, and $\tstar$ is the valid response position with maximum reverse KL. Each domain contains $32$ problems with eight rollouts per problem. Numbers in parentheses denote failed-rollout counts.}  \label{tab:tstar_fail_lcb_science_qwen8b}
  \centering
  \small
  \setlength{\tabcolsep}{3.5pt}
  \begin{tabular*}{\textwidth}
    {@{\extracolsep{\fill}}llccccc@{\hspace{6pt}}}
    \toprule
    & & \makecell{Biology\\(148)}
    & \makecell{Chemistry\\(143)}
    & \makecell{Physics\\(128)}
    & \makecell{Materials\\(134)}
    & \makecell{Pooled\\(553)} \\
    \midrule
    \multicolumn{2}{l}{\textit{Execution-based success rate at $\tstar$}} \\
    & \hspace{1em}Teacher continuation
      & 96.6\% & 93.0\% & 93.8\% & 90.3\% & 93.5\% \\
    & \hspace{1em}Student cont.\ (teacher token)
      & \textbf{31.1\%} & \textbf{41.3\%} & \textbf{40.6\%} & \textbf{45.5\%} & \textbf{39.4\%} \\
    & \hspace{1em}Student cont.\ (original token)
      & 22.3\% & 25.9\% & 30.5\% & 19.4\% & 24.4\% \\
    \bottomrule
  \end{tabular*}
\end{table*}

\subsection{Coverage of GRPO Advantage}
\label{app:grpo}

For binary rewards, group-relative outcome supervision is unavailable for many failed rollouts. With $K=8$ rollouts per problem at update 0 (Table~\ref{tab:grpo-coverage}), $72\%$ of problem groups are all-fail for Qwen3-1.7B and $59\%$ for Qwen3-4B/8B, accounting for $91\%$ and $84\%$ of all failed rollouts, respectively. In an all-fail or all-success group, GRPO assigns zero normalized advantage to every rollout. Privileged-context self-distillation can instead provide token-level supervision on each failed rollout with valid context whenever the teacher and student disagree. \method{} further uses this disagreement to select a branch and supervise the resulting student continuation, including for failed rollouts with zero group-relative advantage.

\begin{table}[ht]
  \centering
\caption{Coverage of group-relative outcome supervision for Qwen3-\{1.7B, 4B, 8B\} on $32$ LiveCodeBench v6 problems at update 0, with eight rollouts per problem. Counts and percentages partition problem groups by verifier outcome and failed rollouts by whether their normalized GRPO advantage is zero. All-fail and all-success groups receive zero normalized advantage.}  \label{tab:grpo-coverage}
  \small
  \begin{tabular}{lccc}
    \toprule
    & {1.7B} & {4B} & {8B} \\
    \midrule
    \multicolumn{4}{l}{\textit{Problem groups (out of 32)}} \\
    \quad All-fail    & 23 (71.9\%) & 19 (59.4\%) & 19 (59.4\%) \\
    \quad All-success &  5 (15.6\%) &  6 (18.8\%) &  5 (15.6\%) \\
    \quad Mixed       &  4 (12.5\%) &  7 (21.9\%) &  8 (25.0\%) \\
    \midrule
    \multicolumn{4}{l}{\textit{Failed rollouts}} \\
    \quad Total              & 202 & 181 & 181 \\
    \quad GRPO advantage $=0$     & 184 (91.1\%) & 152 (84.0\%) & 152 (84.0\%) \\
    \quad GRPO advantage $\neq 0$ & 18 (8.9\%)  & 29 (16.0\%) & 29 (16.0\%) \\
    \bottomrule
  \end{tabular}
\end{table}

\subsection{Continuation Outcomes by Semantic Role}
\label{app:cross-tab}

Successful teacher continuations can begin at non-code peak-divergence positions. Table~\ref{tab:cross-tab} stratifies teacher continuation success by the semantic role at $\tstar$. At 4B and 8B, non-code positions have higher observed success ($13.2\%$ and $15.7\%$) than algorithmic positions ($4.8\%$ and $0.0\%$), while the ordering reverses at 1.7B ($3.3\%$ vs.\ $12.3\%$). Because these comparisons are based on naturally occurring peak positions with small role-specific sample sizes, we do not interpret semantic role alone as a reliable indicator of branch quality.

\begin{table}[ht]
\caption{Verifier success of teacher continuations from $\tstar$, stratified by GPT-5.4 semantic role, on failed Qwen3-\{1.7B, 4B, 8B\} LiveCodeBench v6 rollouts. Each entry reports successful continuations over failed rollouts assigned to the corresponding role.}  \label{tab:cross-tab}
  \centering
  \small
  \begin{tabular}{llccc}
    \toprule
    & & 1.7B & 4B & 8B \\
    \midrule
    \multicolumn{5}{l}{\textit{By semantic role}
\hfill \scriptsize(correct / total)} \\
    & Algorithmic  & 9/73 (12.3\%) & 1/21 (4.8\%)   & 0/21 (0.0\%) \\
    & Naming  & 4/23 (17.4\%) & 0/7 (0.0\%)    & 2/10 (20.0\%) \\
    & Formatting  & 0/15 (0.0\%)  & 1/24 (4.2\%)   & 2/23 (8.7\%) \\
    & Non-code text  & 3/91 (3.3\%)  & 17/129 (13.2\%) & 20/127 (15.7\%) \\
    \bottomrule
  \end{tabular}
\end{table}

\paragraph{Overlap of teacher and student successes.}
On the same $181$ failed Qwen3-8B rollouts, teacher and teacher-token student continuations succeed together in $17$ cases (Table~\ref{tab:continuation-overlap}). The teacher succeeds on $24$ rollouts in total, so the intervened student reproduces $17/24=70.8\%$ of these successful continuations. Outcomes are paired on the same original rollouts, and all counts are reported at the rollout level.

\begin{table}[ht]
  \centering
\caption{Paired continuation outcomes on $181$ failed Qwen3-8B LiveCodeBench v6 rollouts. Both conditions preserve the original prefix, force the teacher's argmax token at peak divergence, and differ only in whether the teacher or student generates the suffix. Success requires passing all private tests.}  \label{tab:continuation-overlap}
  \small
  \begin{tabular}{lcc}
    \toprule
    & Student succeeds & Student fails \\
    \midrule
    Teacher succeeds & 17 & 7 \\
    Teacher fails & 1 & 156 \\
    \bottomrule
  \end{tabular}
\end{table}

\subsection{Qualitative Example of Trajectory Redirection}
\label{app:trajectory-redirection-example}

Figure~\ref{fig:inversion-example} illustrates trajectory redirection in a failed Qwen3-8B LiveCodeBench response. Intervening on a naming token changes the student's subsequent reasoning, leading it to a different algorithm that avoids the original bug.

\begin{figure}[ht]
\centering
\begin{minipage}[t]{0.48\linewidth}
\centering\small\textbf{Original student trace}\vspace{2pt}
\begin{tcolorbox}[colback=gray!8, colframe=gray!40, colbacktitle=gray!20, coltitle=black, boxrule=0.5pt, arc=1.5pt,
  left=5pt, right=5pt, top=4pt, bottom=4pt, title={\scriptsize\sffamily Pre-code reasoning trace}]
\ttfamily\scriptsize
We need to count triples (i, j, k) where
S[i]='A', S[j]='B', S[k]='C' and
j - i = k - j (equally spaced).\par\smallskip
This means \colorbox{red!20}{j} = (i + k) // 2,
so we can iterate over i and k.
\end{tcolorbox}
\vspace{-4pt}
\begin{tcolorbox}[colback=gray!8, colframe=gray!40, colbacktitle=gray!20, coltitle=black, boxrule=0.5pt, arc=1.5pt,
  left=5pt, right=5pt, top=4pt, bottom=4pt, title={\scriptsize\sffamily Generated code}]
\ttfamily\scriptsize
for i in range(n):\par
\hspace*{1em}for k in range(i + 2, n):\par
\hspace*{2em}j = (i + k) // 2\par
\hspace*{2em}if S[i]=='A' and S[j]=='B'\par
\hspace*{3em}and S[k]=='C':\par
\hspace*{3em}count += 1\par
$\cdots$
\end{tcolorbox}
\vspace{-4pt}
\begin{tcolorbox}[colback=red!5, colframe=red!40, boxrule=0.5pt, arc=1.5pt,
  left=5pt, right=5pt, top=3pt, bottom=3pt]
\scriptsize\textcolor{red!70!black}{$\boldsymbol{\times}$~\textbf{Incorrect:} no parity check for \texttt{(i+k)\%2==0}; counts invalid half-integer indices}
\end{tcolorbox}
\end{minipage}
\hfill
\begin{minipage}[t]{0.48\linewidth}
\centering\small\textbf{Student continued with forced teacher token}\vspace{2pt}
\begin{tcolorbox}[colback=gray!8, colframe=gray!40, colbacktitle=gray!20, coltitle=black, boxrule=0.5pt, arc=1.5pt,
  left=5pt, right=5pt, top=4pt, bottom=4pt, title={\scriptsize\sffamily Pre-code reasoning trace}]
\ttfamily\scriptsize
We need to count triples (i, j, k) where
S[i]='A', S[j]='B', S[k]='C' and
j - i = k - j (equally spaced).\par\smallskip
This means \colorbox{green!20}{k} = 2j - i,
so we can iterate over i and j.
\end{tcolorbox}
\vspace{-4pt}
\begin{tcolorbox}[colback=gray!8, colframe=gray!40, colbacktitle=gray!20, coltitle=black, boxrule=0.5pt, arc=1.5pt,
  left=5pt, right=5pt, top=4pt, bottom=4pt, title={\scriptsize\sffamily Generated code}]
\ttfamily\scriptsize
for i in range(n):\par
\hspace*{1em}for j in range(i + 1, n):\par
\hspace*{2em}k = 2 * j - i\par
\hspace*{2em}if k < n and S[i]=='A'\par
\hspace*{3em}and S[j]=='B' and S[k]=='C':\par
\hspace*{3em}count += 1\par
$\cdots$
\end{tcolorbox}
\vspace{-4pt}
\begin{tcolorbox}[colback=green!5, colframe=green!40!black, boxrule=0.5pt, arc=1.5pt,
  left=5pt, right=5pt, top=3pt, bottom=3pt]
\scriptsize\textcolor{green!50!black}{$\boldsymbol{\checkmark}$~\textbf{Correct:} iterates (i,j), computes k=2j$-$i; k is always an integer}
\end{tcolorbox}
\end{minipage}
\caption{Example of trajectory redirection on a failed Qwen3-8B LiveCodeBench response. The peak-divergence token is the naming token \colorbox{red!20}{\small \texttt{j}}, which carries $12.3\%$ of response KL. Forcing the teacher-preferred \colorbox{green!20}{\small \texttt{k}} and regenerating with the student changes the algorithm from iterating over $(i,k)$ to iterating over $(i,j)$, producing an integer $k=2j-i$ and a correct solution.}
\label{fig:inversion-example}
\end{figure}

\subsection{Repetitive Meta-Commentary in a TRD Validation Response}
\label{app:trd-validation-example}

A TRD student validation response can exhibit solution-rewriting language without providing task-specific reasoning. Figure~\ref{fig:trd-validation-example} shows an example accompanying the discussion in Section~\ref{sec:method-student-generation}. The prompt asks for a protein folding stability score, but the response repeatedly asserts fidelity to a ``reference solution'' that is absent from the displayed prompt. It selects option D without explaining how the sequence supports that choice.

\begin{figure}[ht]
  \centering
  \begin{tcolorbox}[colback=gray!5,colframe=gray!40,boxrule=0.5pt,arc=1.5pt,left=7pt,right=7pt,top=6pt,bottom=6pt]
  \small\raggedright
  \setlength{\parskip}{3pt}
  \textbf{Validation question}

  What is the folding stability score of the protein sequence \texttt{'GSSTTRYRFLDEEEARRAAKEWARRGYQVHVTQNGTYWEVEVR'}?

  A: $-0.01$ \quad B: $1.69$ \quad C: $2.49$ \quad D: $0.45$

  Please reason step by step.

  \medskip
  \textbf{Student response (excerpt)}

  \texttt{<reasoning>}

  To ensure full alignment with the \textbf{reference solution}'s framework and methodology, the rewritten solution reflects fidelity, clarity, and reliability as instructed. This approach aligns with the \textbf{reference solution}'s framework and methodology, reinforcing reliability and consistency as outlined in the \textbf{reference solution}.

  {[\ldots]}

  This rewriting process aligns with the \textbf{reference solution}'s methodology, ensuring fidelity and scientific accuracy, preserving reliability and clarity, and reinforcing consistency with the \textbf{reference solution} as instructed.

  \texttt{</reasoning>}

  \texttt{<answer>D</answer>}
  \end{tcolorbox}
\caption{Repetitive meta-commentary in a TRD student validation response. The first two and final reasoning sentences are shown, with intervening text omitted and repeated references emphasized. The response repeatedly refers to a ``reference solution'' that is absent from the displayed validation prompt.}  \label{fig:trd-validation-example}
\end{figure}

The response illustrates a mismatch between reference-oriented rewriting and task-specific reasoning. Its repeated claims of fidelity concern privileged information that is unavailable in the displayed validation prompt. This example is consistent with the conditioning mismatch discussed in Section~\ref{sec:method-student-generation}: full teacher regeneration trains on continuations generated with privileged context, whereas the student must generate from the task context at inference. \method{} limits teacher intervention to the branch action and returns suffix generation to the student.

\section{Why Regenerated Contexts Can Add Supervision}
\label{app:conditional-advantage}

\textbf{Regeneration can add teacher supervision on contexts created by a teacher-guided student continuation.} The motivating question is why \method{} should collect a new suffix when SDPO already supervises the teacher-preferred action. Supervision on the original trajectory constrains predictions at its collected prefixes. Taking the teacher-preferred action creates a continuation whose contexts can require further teacher supervision, even after the original targets have been fitted. This appendix characterizes when those contexts can supply complementary constraints in a shared-parameter model.

Even students that exactly fit the same finite set of teacher targets can predict differently at a subsequent context. We formalize teacher matching and derive the remaining freedom in student predictions, identifying when a new context can distinguish otherwise equally fitting students. Appendix~\ref{app:ident-scope} relates this mechanism to the observed teacher--student discrepancies and task-performance gains. The analysis uses standard linear algebra; its focus on how sequential actions change future training inputs connects to imitation learning~\citep{ross2011dagger} and experimental design~\citep{fontaine2021design}.

\subsection{Teacher Supervision in a Shared-Parameter Model}
\label{app:ident-model}

\paragraph{From a branch target to continuation targets.} Figure~\ref{fig:inversion-example} gives a concrete instance of the supervision considered here. At the original prefix, the teacher prefers the naming token \texttt{k} over the student's \texttt{j}. SDPO can teach this preference using the teacher distribution at that prefix. \method{} also inserts \texttt{k}, lets the student generate the following reasoning and code, and obtains teacher targets at the resulting prefixes. Each subsequent prediction is therefore another opportunity to learn from the teacher's privileged context.

Learning at the original prefixes can also affect these later predictions because the student shares parameters across contexts. The question is whether fitting the original teacher targets sufficiently determines the predictions on the regenerated continuation. To answer it, we compare student parameter updates that fit the same collected targets and characterize how much their predictions can differ at another context.

\paragraph{Representing teacher matching mathematically.} We use a binary next-token model to make this question tractable. Its logit, the score converted into a token probability by the sigmoid function $\sigma$, is written as the initial score plus the effect of a shared parameter update $u$:
\begin{equation}
\pi_u(1\mid h)=\sigma\!\left(b(h)+\psi(h)^\top u\right),
\qquad u\in\mathbb R^d,
\label{eq:ident-model}
\end{equation}
where $h=(x,z_{<t})$ is the student history, $b(h)$ is its initial logit, and $\psi(h)$ describes how each component of $u$ changes that prediction. All histories share $u$, and their features may overlap or be linearly dependent, allowing learning to transfer across contexts. This model is exact for an affine logit head with fixed features. Taking $\psi(h)$ to be the initial logit Jacobian gives a local approximation for a neural model. Binary notation keeps the analysis simple; multiple log-odds contrasts can be stacked for a larger vocabulary.

Each collected context supplies one equation for matching its teacher target. At history $h_i$, let the frozen teacher target be $q_i=\pi_{\bar\phi}(1\mid h_i,c_{y_i})\in(0,1)$ and define the requested logit correction $d_i=\operatorname{logit}(q_i)-b(h_i)$. With $\psi(h_i)^\top$ as row $i$ of $A$, exact teacher matching at all collected contexts is equivalent to
\[
\psi(h_i)^\top u=d_i\quad\text{for every }i,
\qquad\text{or collectively}\qquad Au=d.
\]
For positive-weight full-distribution forward or reverse KL, these equations describe exactly the zero-loss solutions. The solution set characterizes the parameter corrections consistent with exact teacher matching. The student features depend only on its history, while the teacher retains the original privileged context $c_{y_i}$; different records can therefore assign conflicting targets to the same student history.

We compare the constraints supplied by finite collected datasets. Let $(A_{\mathrm S},d_{\mathrm S})$ describe the original-trajectory control and $(A_{\mathrm T},d_{\mathrm T})$ the trajectories retained by \method{}. Successes and common fallback examples remain unchanged, while eligible original failures are replaced. Each matrix includes the complete retained trajectories. The teacher and distillation configuration are held fixed, and collection and branch selection are outside differentiation.

\subsection{Prediction Ambiguity after Exact Teacher Matching}
\label{app:ident-ambiguity}

{We quantify how much two equally fitting students can disagree at another context.} The row space of $A$ contains the parameter directions constrained by the collected histories; its orthogonal complement contains corrections that leave every collected prediction unchanged. A new history can be sensitive to those otherwise unconstrained corrections. The proposition gives the exact range of predictions compatible with the collected teacher targets, expressed as logit corrections.

\begin{proposition}[Unresolved predictions after fitting the teacher]
\label{prop:ident-ambiguity}
Let $\mathcal U(A,d)=\{u:Au=d,\ \|u\|_2\leq B\}$ be nonempty. Write $u_0=A^\dagger d$, $P=A^\dagger A$, and $r=\sqrt{B^2-\|u_0\|_2^2}$, where $A^\dagger$ is the Moore--Penrose inverse, $u_0$ is the minimum-norm solution, and $P$ projects onto the row space of $A$. At an evaluation history with feature $\psi$, the possible logit corrections among these equally fitting students form exactly the interval
\begin{equation}
\left[\psi^\top u_0-r\|(I-P)\psi\|_2,\quad
\psi^\top u_0+r\|(I-P)\psi\|_2\right].
\label{eq:ident-interval}
\end{equation}
Consequently, the smallest worst-case absolute error for predicting this correction from the observed equations is $r\|(I-P)\psi\|_2$, attained by the interval midpoint.
\end{proposition}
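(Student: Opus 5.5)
The plan is to parametrize the feasible set. Since $Au=d$ is consistent (the set is nonempty), every solution has the form $u=u_0+v$ with $v\in\ker A$. Here $u_0=A^\dagger d$ lies in the row space of $A$, so $u_0\perp v$. The norm constraint then reads $\|u_0\|_2^2+\|v\|_2^2\le B^2$, i.e.\ $\|v\|_2\le r$. Nonemptiness guarantees $\|u_0\|_2\le B$: the minimum-norm solution has norm at most that of any feasible $u$, so $r$ is real. Hence $\mathcal U(A,d)=u_0+\{v\in\ker A:\|v\|_2\le r\}$, a ball of radius $r$ in the null space centered at $u_0$.

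Next, evaluate the linear functional $u\mapsto\psi^\top u=\psi^\top u_0+\psi^\top v$. Since $I-P$ is the orthogonal projector onto $\ker A$ and $v\in\ker A$, we have $\psi^\top v=((I-P)\psi)^\top v$. By Cauchy--Schwarz, $|\psi^\top v|\le r\|(I-P)\psi\|_2$. The upper bound is attained at $v=r(I-P)\psi/\|(I-P)\psi\|_2$ and the lower at its negative; both lie in the null-space ball. If $(I-P)\psi=0$, the functional is constant and the interval degenerates to a point. The feasible set is convex and the functional is continuous and linear, so its image is connected. The image is therefore exactly the closed interval in \eqref{eq:ident-interval}, not merely contained in it. This step needs care: I would argue via the segment $v=s\,r(I-P)\psi/\|(I-P)\psi\|_2$ for $s\in[-1,1]$, which sweeps the full interval.

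For the consequence, treat prediction as choosing a number $m$ as a function of the observed equations, and measure worst-case error $\sup_{u\in\mathcal U}|\psi^\top u-m|$ over the interval $[L,U]$ just computed. For any $m$, this supremum equals $\max(|m-L|,|U-m|)\ge (U-L)/2=r\|(I-P)\psi\|_2$. Equality holds exactly at the midpoint $m=\psi^\top u_0$. This is the standard Chebyshev-center fact for an interval.

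The main obstacle is keeping the orthogonal decomposition bookkeeping clean rather than any genuine difficulty. Concretely, I need to verify that $A^\dagger A$ is the orthogonal projector onto $\mathrm{row}(A)$ and that $A^\dagger d\in\mathrm{row}(A)$ solves the system when it is consistent. I would take both from the Moore--Penrose identities $AA^\dagger A=A$ and $(A^\dagger A)^\top=A^\dagger A$, together with $\mathrm{range}(A^\dagger)=\mathrm{range}(A^\top)$.
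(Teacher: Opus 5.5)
Your proposal is correct and follows the paper's proof essentially step for step. It uses the same decomposition $u=u_0+w$ with $w\in\ker A$, the same orthogonality argument reducing the norm constraint to $\|w\|_2\le r$, Cauchy--Schwarz with equality at $w$ parallel or antiparallel to $(I-P)\psi$, and the same half-width argument for the midpoint. You also make explicit three points the paper leaves implicit: that $r$ is real, that interior points of the interval are attained (via your segment), and the Moore--Penrose identities behind $u_0\in\mathrm{row}(A)$ and $I-P$ being the projector onto $\ker A$.
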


\begin{proof}
Every solution is $u=u_0+w$ with $w\in\ker A$. Since $u_0$ lies in the row space of $A$, the norm constraint is equivalent to $\|w\|_2\leq r$. Hence $\psi^\top u=\psi^\top u_0+[(I-P)\psi]^\top w$. Cauchy--Schwarz gives the stated endpoints, both attained by choosing $w$ parallel or antiparallel to $(I-P)\psi$; if that vector is zero the interval is a singleton. No point prediction can be closer than half the interval width to both endpoints, and the midpoint attains that error.
\end{proof}

\paragraph{Reading the result.} The term $\|(I-P)\psi\|_2$ measures how much of the new context's prediction depends on directions that the collected targets leave unconstrained, while $r$ measures the remaining freedom within the correction radius. If either is zero, every exact-fit student agrees at this context. Otherwise, matching all original targets leaves its prediction unresolved. The interval quantifies the range of predictions among fitting students in this model class. It also allows transfer: a previously unvisited context whose features lie in the collected row space already has a determined prediction.

\paragraph{Repeated supervision and new contexts.} Repeating an identical target or increasing its positive weight preserves the exact-fit solution set and can affect optimization and noise averaging. A consistent target at a new context within the old row space also preserves this solution set. A context with a component outside that space can instead distinguish previously equivalent corrections. A large KL at an observed position indicates a discrepancy to fit; the collected feature directions determine how that fit constrains later predictions. Regeneration collects additional contexts within the current iteration.

\subsection{Empirical Evidence for Regenerated Trajectory Supervision}
\label{app:ident-scope}

{The existing experiments support the practical value of changing the supervised trajectory, while the analysis explains how new contexts can supply complementary constraints.} They address different parts of the argument and should be interpreted together.

\paragraph{Does changing the trajectory improve learning?} The SDPO--\method{} comparisons test training on the original versus teacher-guided regenerated trajectories. In Table~\ref{tab:lcb-results}, mean Avg@4 across the three coding models increases from $49.2\%$ to $50.4\%$, and corresponding mean Pass@4 increases from $50.3\%$ to $51.4\%$. Table~\ref{tab:science-reasoning} also reports higher average performance for \method{} under both step and time budgets. These results provide empirical evidence for the complete branch--regenerate--distill procedure. Isolating the contribution of supervision on the regenerated suffix would require a separate loss ablation on the same collected trajectories. The comparisons cover different model settings; the science methods use the distillation distances reported in Appendix~\ref{app:science-hparams}.

\paragraph{What the proof adds to this evidence.} Proposition~\ref{prop:ident-ambiguity} characterizes how original-context targets can leave shared parameter corrections indistinguishable. Supervision at a regenerated context can distinguish those corrections when its prediction depends on directions left unconstrained by the original targets. The $2.37\times$ larger mean token reverse KL on regenerated Qwen3-8B suffixes (Figure~\ref{fig:suffix-kl-comparison}) measures teacher--student discrepancy at those contexts. In the model analyzed here, the complementary information supplied by their targets depends on the feature directions they constrain and the remaining prediction freedom. Because \method{} replaces eligible failed trajectories, its overall effect also depends on the constraints removed with the original suffix.

\paragraph{Implication for trajectory collection.} The experiments support teacher-guided regeneration as a way to improve learning, and the analysis explains how the resulting contexts can add complementary teacher constraints. Their contribution depends on the collected histories, the shared representation, and the quality of teacher targets.

\section{Experiment Details}
\label{app:hyperparameters}

\setlength{\tabcolsep}{3.2pt}

\subsection{Learning with Rich Environment Feedback}
\label{app:lcb-hparams}

We train Qwen3-8B, Qwen3-14B, and OLMo3-7B-Instruct on LiveCodeBench v6 for 80 updates using four NVIDIA B200 GPUs. For each failed rollout, the privileged context is a successful sibling when available and otherwise execution-derived feedback. For TRD~\citep{jiang2026trajectory}, we use reverse KL for distillation because it yields higher validation performance than forward KL in our LiveCodeBench experiments; all other settings follow the original implementation unless otherwise specified.

\begin{table}[h]
  \centering
  \footnotesize
  \setlength{\tabcolsep}{4pt}
  \caption{Hyperparameters for learning with rich environment feedback on LiveCodeBench.}
  \begin{tabular*}{\linewidth}{@{\extracolsep{\fill}}>{\raggedright\arraybackslash}p{0.325\linewidth}*{4}{>{\raggedright\arraybackslash}p{0.15\linewidth}}@{}}
    \toprule
    \textbf{Parameters} & \textbf{GRPO} & \textbf{SDPO} & \textbf{TRD} & \textbf{\method{}} \\
    \midrule
    \multicolumn{5}{@{}l@{}}{\textbf{General}} \\
    Model & \multicolumn{4}{l}{Qwen3-8B, Qwen3-14B, Olmo3-7B-Instruct} \\
    Thinking (Qwen3) & False & False & False & False \\
    \midrule
    \multicolumn{5}{@{}l@{}}{\textbf{Data}} \\
    Max. prompt length & 2048 & 2048 & 2048 & 2048 \\
    Max. response length & 8192 & 8192 & 8192 & 8192 \\
    \midrule
    \multicolumn{5}{@{}l@{}}{\textbf{Batching}} \\
    Question batch size & 32 & 32 & 32 & 32 \\
    Mini-batch size & 8 & 1 & 1 & 1 \\
    Number of rollouts & 8 & 8 & 8 & 8 \\
    \midrule
    \multicolumn{5}{@{}l@{}}{\textbf{Rollout}} \\
    Inference engine & vLLM & vLLM & vLLM & vLLM \\
    Temperature & 1.0 & 1.0 & 1.0 & 1.0 \\
    Post-rollout trajectory & \makecell[l]{Original student\\rollout} & \makecell[l]{Original student\\rollout} & \makecell[l]{Teacher-refined\\rollout} & \makecell[l]{Student prefix\\+teacher branch\\+ student suffix} \\
    \midrule
    \multicolumn{5}{@{}l@{}}{\textbf{Validation}} \\
    Number of rollouts & 4 & 4 & 4 & 4 \\
    Temperature & 0.6 & 0.6 & 0.6 & 0.6 \\
    Top-$p$ & 0.95 & 0.95 & 0.95 & 0.95 \\
    \midrule
    \multicolumn{5}{@{}l@{}}{\textbf{Loss}} \\
    Learning signal & Binary reward & Distillation & Distillation & Distillation \\
    Top-$K$ distillation & -- & 20 & Full vocabulary & 20 \\
    Target divergence & -- & Reverse KL & Reverse KL & Reverse KL \\
    Branch-selection divergence & -- & -- & -- & Reverse KL \\
    Teacher-EMA update rate & -- & 0.01 & 0.01 & 0.01 \\
    Rollout importance sampling clip & 2.0 & 2.0 & 2.0 & 2.0 \\
    \midrule
    \multicolumn{5}{@{}l@{}}{\textbf{Training}} \\
    Optimizer & AdamW & AdamW & AdamW & AdamW \\
    Learning rate & $1\!\times\!10^{-6}$ & $1\!\times\!10^{-6}$ & $1\!\times\!10^{-6}$ & $1\!\times\!10^{-6}$ \\
    Warmup steps & 0 & 0 & 0 & 0 \\
    Weight decay & 0.01 & 0.01 & 0.01 & 0.01 \\
    Gradient clip norm & 1.0 & 1.0 & 1.0 & 1.0 \\
    \bottomrule
  \end{tabular*}
  \label{tab:lcb-hparams}
\end{table}

\subsection{Learning without Rich Environment Feedback}
\label{app:science-hparams}

We train a Qwen3-8B model for each SciKnowEval L3 domain: biology, chemistry, materials science, and physics. We mainly follow the SDPO configuration without rich environment feedback~\citep{hubotter2026sdpo}, while searching across target divergence and branch-selection divergence. The distillation divergence and branch-selection divergence are method-specific and reported in Table~\ref{tab:science-hparams}.

\begin{table}[h]
  \centering
  \footnotesize
  \setlength{\tabcolsep}{4pt}
  \caption{Hyperparameters for learning without rich environment feedback on SciKnowEval L3. Paired GRPO values denote off-policy / on-policy settings.}
  \begin{tabular*}{\linewidth}{@{\extracolsep{\fill}}>{\raggedright\arraybackslash}p{0.325\linewidth}*{4}{>{\raggedright\arraybackslash}p{0.15\linewidth}}@{}}
    \toprule
    \textbf{Parameters} & \textbf{GRPO} & \textbf{SDPO} & \textbf{TRD} & \textbf{\method{}} \\
    \midrule
    \multicolumn{5}{@{}l@{}}{\textbf{General}} \\
    Model & Qwen3-8B & Qwen3-8B & Qwen3-8B & Qwen3-8B \\
    Thinking & False & False & False & False \\
    \midrule
    \multicolumn{5}{@{}l@{}}{\textbf{Data}} \\
    Max. prompt length & 2048 & 2048 & 2048 & 2048 \\
    Max. response length & 8192 & 8192 & 8192 & 8192 \\
    \midrule
    \multicolumn{5}{@{}l@{}}{\textbf{Batching}} \\
    Question batch size & 32 & 32 & 32 & 32 \\
    Mini-batch size & $8\,/\,32$ & 32 & 32 & 32 \\
    Number of rollouts & 8 & 8 & 8 & 8 \\
    \midrule
    \multicolumn{5}{@{}l@{}}{\textbf{Rollout}} \\
    Inference engine & vLLM & vLLM & vLLM & vLLM \\
    Temperature & 1.0 & 1.0 & 1.0 & 1.0 \\
    Post-rollout trajectory & \makecell[l]{Original student\\rollout} & \makecell[l]{Original student\\rollout} & \makecell[l]{Teacher-refined\\rollout} & \makecell[l]{Student prefix\\+teacher branch\\+ student suffix} \\
    \midrule
    \multicolumn{5}{@{}l@{}}{\textbf{Validation}} \\
    Number of rollouts & 128 & 128 & 128 & 128 \\
    Temperature & 0.6 & 0.6 & 0.6 & 0.6 \\
    Top-$p$ & 0.95 & 0.95 & 0.95 & 0.95 \\
    \midrule
    \multicolumn{5}{@{}l@{}}{\textbf{Loss}} \\
    Learning signal & Binary reward & Distillation & Distillation & Distillation \\
    Top-$K$ distillation & -- & 100 & Full vocabulary & 100 \\
    Target divergence & -- & JSD & Forward KL & Forward KL \\
    Branch-selection divergence & -- & -- & -- & Reverse KL \\
    Teacher-EMA update rate & -- & 0.05 & 0.05 & 0.05 \\
    Rollout importance sampling clip & 2.0 & 2.0 & 2.0 & 2.0 \\
    \midrule
    \multicolumn{5}{@{}l@{}}{\textbf{Training}} \\
    Optimizer & AdamW & AdamW & AdamW & AdamW \\
    Learning rate & $10^{-6}\,/\,10^{-5}$ & $1\!\times\!10^{-5}$ & $1\!\times\!10^{-5}$ & $1\!\times\!10^{-5}$ \\
    Warmup steps & 10 & 10 & 10 & 10 \\
    Weight decay & 0.01 & 0.01 & 0.01 & 0.01 \\
    Gradient clip norm & 1.0 & 1.0 & 1.0 & 1.0 \\
    \bottomrule
  \end{tabular*}
  \label{tab:science-hparams}
\end{table}

\subsection{Computational Efficiency}
\label{app:lcb-wall-clock}

\begin{table}[h]
  \caption{Average wall-clock time per training step (minutes) for Qwen3-8B on LiveCodeBench v6 using four NVIDIA B200 GPUs. }
  \label{tab:lcb-wall-clock}
  \centering
  \small
  \begin{tabular}{lcccc}
    \toprule
    Method & \method{} & SDPO & GRPO & TRD \\
    \midrule
    Average time (min/step) & $16.60$ & $15.05$ & $16.69$ & $16.36$ \\
    \bottomrule
  \end{tabular}
\end{table}

\method{} averages $16.60$ minutes per training step on LiveCodeBench v6, compared with $15.05$ for SDPO, $16.69$ for GRPO, and $16.36$ for TRD (Table~\ref{tab:lcb-wall-clock}). Per-step time is computed by dividing the total recorded wall-clock time by 80 training updates. Relative to SDPO, \method{} adds $1.55$ minutes per step ($10.3\%$), reflecting the additional teacher scoring and student suffix regeneration. Its average per-step cost remains comparable to GRPO and TRD, at $0.5\%$ lower and $1.4\%$ higher, respectively.

\end{document}